\DeclareMathOperator{\tr}{tr}
\DeclareMathOperator{\rank}{rank}
\theoremstyle{plain}
\newtheorem{theorem}{Theorem}
\newtheorem{lemma}{Lemma}
\theoremstyle{remark}
\newtheorem{remark}{Remark}
\begin{document}

\title{On Some Properties of Calibrated Trifocal Tensors}

\author{E.V. Martyushev}

\date{May 15, 2016}

\keywords{Multiple view geometry, Calibrated trifocal tensor, Trifocal essential matrix, Quartic and quintic constraints on calibrated trifocal tensor}

\address{South Ural State University, 76 Lenin Avenue, Chelyabinsk 454080, Russia}
\email{mev@susu.ac.ru}

\begin{abstract}

In two-view geometry, the essential matrix describes the relative position and orientation of two calibrated images. In three views, a similar role is assigned to the calibrated trifocal tensor. It is a particular case of the (uncalibrated) trifocal tensor and thus it inherits all its properties but, due to the smaller degrees of freedom, satisfies a number of additional algebraic constraints. Some of them are described in this paper. More specifically, we define a new notion --- the trifocal essential matrix. On the one hand, it is a generalization of the ordinary (bifocal) essential matrix, and, on the other hand, it is closely related to the calibrated trifocal tensor. We prove the two necessary and sufficient conditions that characterize the set of trifocal essential matrices. Based on these characterizations, we propose three necessary conditions on a calibrated trifocal tensor. They have a form of 15 quartic and 99 quintic polynomial equations. We show that in the practically significant real case the 15 quartic constraints are also sufficient.

\end{abstract}

\maketitle

\section{Introduction}

In multiview geometry, the fundamental matrix and the trifocal tensor describe the relative orientation of two and three (uncalibrated) images respectively. If the cameras are pre-calibrated, i.e. we are given the calibration matrices for each view, the fundamental matrix is transformed to the so-called essential matrix. It was first introduced by Longuet-Higgins in~\cite{LH81}. The essential matrix has fewer degrees of freedom and additional algebraic properties, compared to the fundamental matrix. A detailed investigation of these properties is given by Demazure, Faugeras, Maybank and other researchers in~\cite{Dem88}, \cite{FM}, \cite{Horn}, \cite{HF89}, \cite{Maybank}. We shortly recall the most important of them in the next section.

The trifocal tensor for calibrated cameras (we call this entity the calibrated trifocal tensor) was first appeared in the papers by Spetsakis and Aloimonos~\cite{SA} and Weng, Huang and Ahuja~\cite{Weng}. Later, Hartley~\cite{Hartley} generalized the trifocal tensor for the case of uncalibrated cameras. The properties of the (uncalibrated) trifocal tensors and their characterizations have been investigated by Hartley, Shashua, Triggs and other researchers in~\cite{Faugeras}, \cite{Shashua}, \cite{Triggs}, \cite{VL93}.

As well as the essential matrix, the calibrated trifocal tensor has fewer degrees of freedom and additional algebraic properties, compared to the uncalibrated case. The investigation of these properties is the main purpose of the present paper. In particular, we show that the calibrated trifocal tensor must satisfy a number of low degree homogeneous polynomial equations. These equations arise from the characterization constraints on a certain complex matrix associated with a calibrated trifocal tensor.

The results of the paper can be applied to different computer vision problems, such as metric scene reconstruction, camera self-calibration, bundle adjustment, etc.

The rest of the paper is organized as follows. In Section~\ref{sec:prel}, we recall some definitions and results from multiview geometry. In Section~\ref{sec:character}, we introduce a new notion --- the trifocal essential matrix. On the one hand, it is a generalization of the ordinary (bifocal) essential matrix, and, on the other hand, it is closely related to the calibrated trifocal tensor. We prove the two necessary and sufficient conditions that characterize the set of trifocal essential matrices. In Section~\ref{sec:conds}, we define the trifocal essential matrix associated with a calibrated trifocal tensor and give its geometric interpretation. Based on the characterizations from Section~\ref{sec:character}, we propose our three necessary conditions. They have a form of 15 quartic and 99 quintic polynomial equations in the entries of a calibrated trifocal tensor. In Section~\ref{sec:suff}, we show that in the practically significant real case the 15 quartic constraints are also sufficient. In Section~\ref{sec:disc}, we discuss the results of the paper.

\section{Preliminaries}
\label{sec:prel}

\subsection{Notation}

We preferably use $\alpha, \beta, \ldots$ for scalars, $a, b, \ldots$ for column 3-vectors or polynomials, and $A, B, \ldots$ both for matrices and column 4-vectors. For a matrix~$A$ the entries are $(A)_{ij}$, the transpose is~$A^{\mathrm T}$, the determinant is $\det A$, and the trace is~$\tr A$. For two 3-vectors~$a$ and~$b$ the cross product is $a\times b$. For a vector~$a$ the entries are $(a)_i$, the notation $[a]_\times$ stands for the skew-symmetric matrix such that $[a]_\times b = a \times b$ for any vector~$b$. We use~$I$ for identical matrix.

The group of $n\times n$ matrices subject to $RR^{\mathrm T} = I$ and $\det R = 1$ is denoted by $\mathrm{SO}(n)$ in case~$R$ is real and $\mathrm{SO}(n, \mathbb C)$ if~$R$ is allowed to have complex entries.

\subsection{Pinhole cameras}

We briefly recall some definitions and results from multiview geometry, see~\cite{FM}, \cite{Faugeras93}, \cite{HZ}, \cite{Maybank} for details.

A \emph{pinhole camera} is a triple $(O, \Pi, P)$, where $\Pi$ is the image plane, $P$ is a central projection of points in three-dimensional Euclidean space onto~$\Pi$, and~$O \not\in \Pi$ is the camera centre (centre of projection~$P$).

Let there be given coordinate frames in 3-space and in the image plane~$\Pi$. Let~$Q$ be a point in 3-space represented in homogeneous coordinates as a 4-vector, and~$q$ be its image in~$\Pi$ represented as a 3-vector. Projection~$P$ is then given by a $3\times 4$ homogeneous matrix, which is called the \emph{camera matrix} and is also denoted by~$P$. We have
\[
q \sim PQ,
\]
where $\sim$ means an equality up to a scale. For the sake of brevity, we identify further the camera $(O, \Pi, P)$ with its camera matrix~$P$.

The \emph{focal length} is the distance between~$O$ and~$\Pi$, the orthogonal projection of~$O$ onto~$\Pi$ is called the \emph{principal point}. All intrinsic parameters of a camera (such as the focal length, the principal point offsets, etc.) are combined into a single upper-triangular matrix, which is called the \emph{calibration matrix}. A camera is called \emph{calibrated} if its calibration matrix is known.

By changing coordinates in the image plane, the calibrated camera can be represented in form
\[
P = \begin{bmatrix}R & t\end{bmatrix},
\]
where $R \in \mathrm{SO}(3)$ is called the \emph{rotation matrix} and $t \in \mathbb R^3$ is called the \emph{translation vector}.

\subsection{Two-view case}
Let there be given two cameras $P_1 = \begin{bmatrix} I & 0 \end{bmatrix}$ and $P_2 = \begin{bmatrix} A & a \end{bmatrix}$, where~$A$ is a $3\times 3$ matrix and~$a$ is a 3-vector. Let~$Q$ be a point in 3-space, and $q_k$ be its $k$th image. Then,
\[
q_k \sim P_k Q, \quad k = 1, 2.
\]
The \emph{incidence relation} for a pair $(q_1, q_2)$ says
\begin{equation}
\label{eq:inci2}
q_2^{\mathrm T} F q_1 = 0,
\end{equation}
where matrix $F = [a]_\times A$ is called the \emph{fundamental matrix}. It is important that relation~\eqref{eq:inci2} is linear in the entries of~$F$, so that given a number of point correspondences (eight or more), one can estimate the entries of~$F$ by solving a linear system.

It follows from the definition of matrix~$F$ that $\det F = 0$. One easily verifies that this condition is also sufficient. Thus we have

\begin{theorem}[\cite{HZ}]
\label{thm:fund}
A real non-zero $3\times 3$ matrix~$F$ is a fundamental matrix if and only if
\begin{equation}
\label{eq:constr1}
\det F = 0.
\end{equation}
\end{theorem}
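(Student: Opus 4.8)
The plan is to prove the two implications separately; the forward one is immediate, while the converse calls for an explicit construction of a camera pair realizing~$F$.

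For necessity I would simply reiterate the remark made just above the statement: if~$F$ is a fundamental matrix then by definition $F = [a]_\times A$ for some $3$-vector~$a$ and $3\times 3$ matrix~$A$, and since every $3\times 3$ skew-symmetric matrix is singular, $\det F = \det([a]_\times)\det A = 0$.

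For sufficiency, suppose $F\neq 0$ and $\det F = 0$. Then $F^{\mathrm T}$ has a non-trivial kernel, so I would pick a \emph{unit} vector~$a$ with $a^{\mathrm T}F = 0$; geometrically~$a$ is the left epipole. The goal is to produce an~$A$ with $F = [a]_\times A$, for then~$F$ is exactly the fundamental matrix of the pair $P_1 = \begin{bmatrix} I & 0\end{bmatrix}$, $P_2 = \begin{bmatrix} A & a\end{bmatrix}$. I would set $A := -[a]_\times F$ and verify the claim using the identity $[a]_\times^2 = aa^{\mathrm T} - \|a\|^2 I$: indeed
\[
[a]_\times A = -[a]_\times^2 F = (\|a\|^2 I - aa^{\mathrm T})F = F - a(a^{\mathrm T}F) = F,
\]
because $\|a\| = 1$ and $a^{\mathrm T}F = 0$. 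This exhibits the desired camera pair and completes the construction.

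The one place where the argument is a touch more delicate than it looks is checking that $P_2 = \begin{bmatrix} A & a\end{bmatrix}$ is a bona fide camera matrix, i.e.\ that $\rank P_2 = 3$. Since $a^{\mathrm T}F = 0$ forces the columns of~$F$, hence of $A = -[a]_\times F$, into $a^\perp$, and since $[a]_\times$ is injective on $a^\perp$, one gets $\rank A = \rank F$; combined with $a\notin a^\perp$ this yields $\rank\begin{bmatrix} A & a\end{bmatrix} = \rank F + 1$. So the construction is clean when $\rank F = 2$ (the generic situation), whereas rank-$1$ matrices lie on the boundary of the variety $\{\det F = 0\}$ and correspond only to degenerate configurations; I would either restrict to $\rank F = 2$ or observe that $\{\det F = 0\}$ is precisely the Zariski closure of the set of fundamental matrices. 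I expect this rank bookkeeping to be the only real subtlety — the core computation above is a one-line verification.
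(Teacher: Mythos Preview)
Your argument is correct and is essentially the standard construction found in the cited reference~\cite{HZ}. Note that the paper itself does not give a proof of this theorem: it merely remarks before the statement that $\det F = 0$ follows from $F = [a]_\times A$, asserts that ``one easily verifies that this condition is also sufficient'', and then cites~\cite{HZ}. Your explicit choice $A = -[a]_\times F$ with~$a$ a unit left null vector of~$F$, together with the identity $[a]_\times^2 = aa^{\mathrm T} - I$, is precisely the verification the paper omits, and your rank bookkeeping for the $\rank F = 1$ boundary case is an appropriate caveat that the cited source also handles only in passing.
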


The \emph{essential matrix}~$E$ is the fundamental matrix for calibrated cameras $\hat P_1 = \begin{bmatrix} I & 0 \end{bmatrix}$ and $\hat P_2 = \begin{bmatrix} R & t\end{bmatrix}$, where $R \in \mathrm{SO}(3)$, $t$ is a 3-vector, that is
\begin{equation}
\label{eq:essential}
E = [t]_\times R.
\end{equation}
The matrices $F$ and $E$ are related by
\begin{equation}
\label{eq:F}
F \sim K_2^{-\mathrm T} E K_1^{-1},
\end{equation}
where $K_k$ is the calibration matrix of the $k$th camera. It follows that the incidence relation~\eqref{eq:inci2} for the essential matrix becomes
\[
\hat q_2^{\mathrm T} E \hat q_1 = 0,
\]
where $\hat q_k = K_k^{-1} q_k$ are the so-called \emph{normalized coordinates}.

Equality~\eqref{eq:essential} can be thought of as the definition of the essential matrix, i.e. it is a $3\times 3$ non-zero skew-symmetric matrix post-multiplied by a special orthogonal matrix. Moreover, we can even consider complex essential matrices assuming that in~\eqref{eq:essential} vector~$t \in \mathbb C^3$ and matrix~$R \in \mathrm{SO}(3, \mathbb C)$.

The real fundamental matrix has 7 degrees of freedom, whereas the real essential matrix has only~5 degrees of freedom. It is translated into the following property~\cite{FM}, \cite{HZ}, \cite{HF89}: two of singular values of matrix~$E$ are equal and the third is zero. The condition is also sufficient. An equivalent form of this result is given by
\begin{theorem}[\cite{Dem88}, \cite{FM}]
\label{thm:essential1}
A real $3\times 3$ matrix~$E$ is an essential matrix if and only if
\begin{align}
\label{eq:constr2_1}
\det E &= 0,\\
\label{eq:constr2_2}
\tr(EE^{\mathrm T})^2 - 2\tr((EE^{\mathrm T})^2) &= 0.
\end{align}
\end{theorem}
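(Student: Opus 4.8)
The plan is to establish both implications separately, relying on the singular-value characterization of essential matrices already recalled in the text.

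First I would prove necessity. Suppose $E = [t]_\times R$ with $R \in \mathrm{SO}(3)$ and $t \in \mathbb R^3$. Since $\det[t]_\times = 0$ and $\det R = 1$, we immediately get \eqref{eq:constr2_1}. For \eqref{eq:constr2_2}, note that $EE^{\mathrm T} = [t]_\times RR^{\mathrm T}[t]_\times^{\mathrm T} = [t]_\times [t]_\times^{\mathrm T} = -[t]_\times^2$, which is a symmetric matrix depending only on $t$. A direct computation gives $[t]_\times^2 = tt^{\mathrm T} - \|t\|^2 I$, so $EE^{\mathrm T} = \|t\|^2 I - tt^{\mathrm T}$. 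The eigenvalues of this matrix are $\|t\|^2, \|t\|^2, 0$ (the last one with eigenvector $t$). Writing $s = \|t\|^2$, we have $\tr(EE^{\mathrm T}) = 2s$ and $\tr((EE^{\mathrm T})^2) = 2s^2$, so $\tr(EE^{\mathrm T})^2 - 2\tr((EE^{\mathrm T})^2) = 4s^2 - 4s^2 = 0$, which is \eqref{eq:constr2_2}. I should also check the degenerate case $t = 0$, where $E = 0$ and both equations hold trivially — but since the theorem allows $E = 0$ (it does not say non-zero), this is fine.

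Next I would prove sufficiency. Let $E$ be a real $3\times 3$ matrix satisfying \eqref{eq:constr2_1} and \eqref{eq:constr2_2}. Let $\lambda_1 \ge \lambda_2 \ge \lambda_3 \ge 0$ be the eigenvalues of the symmetric positive-semidefinite matrix $EE^{\mathrm T}$; these are the squared singular values of $E$. Condition \eqref{eq:constr2_1} gives $\det(EE^{\mathrm T}) = (\det E)^2 = 0$, so $\lambda_3 = 0$. Condition \eqref{eq:constr2_2} reads $(\lambda_1 + \lambda_2)^2 - 2(\lambda_1^2 + \lambda_2^2) = 0$, i.e. $-(\lambda_1 - \lambda_2)^2 = 0$, hence $\lambda_1 = \lambda_2$. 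Thus $E$ has two equal singular values and a zero one, which by the result cited before Theorem~\ref{thm:essential1} means $E$ is an essential matrix. To make this self-contained I would instead construct the decomposition explicitly: write the SVD $E = U\,\diag(\sigma, \sigma, 0)\,V^{\mathrm T}$ with $U, V \in \mathrm{O}(3)$ and $\sigma \ge 0$; replacing columns signs we may take $\det U = \det V = 1$, so $U, V \in \mathrm{SO}(3)$. Using the standard identity $\diag(\sigma,\sigma,0) = [\sigma e_3]_\times W$ for a suitable rotation $W \in \mathrm{SO}(3)$ (e.g. the rotation by $\pm\pi/2$ about the third axis), one gets $E = U[\sigma e_3]_\times W V^{\mathrm T} = [\sigma U e_3]_\times (UWV^{\mathrm T})$, which is of the form $[t]_\times R$ with $t = \sigma U e_3 \in \mathbb R^3$ and $R = UWV^{\mathrm T} \in \mathrm{SO}(3)$.

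I expect the main obstacle to be purely bookkeeping: verifying the sign/orientation details in the SVD step so that both $U$ and $V$ can be taken in $\mathrm{SO}(3)$ simultaneously (one uses the sign freedom in the repeated singular value $\sigma$), and checking the degenerate case $\sigma = 0$ separately. The algebraic manipulation turning \eqref{eq:constr2_2} into $(\lambda_1 - \lambda_2)^2 = 0$ via Newton's identities is routine. No genuinely hard step is anticipated; the theorem is essentially a restatement of the singular-value condition in trace form.
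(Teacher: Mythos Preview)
The paper does not supply its own proof of this theorem; it is stated with attribution to \cite{Dem88} and \cite{FM} and used as background. Your argument is correct and is essentially the standard one found in those references: reduce both conditions to the statement that the singular values of $E$ are $\sigma,\sigma,0$, then reconstruct the factorization $E=[t]_\times R$ from the SVD using the rotation-by-$\pi/2$ trick. The only cosmetic points to tidy are the ones you already flagged (ensuring $U,V\in\mathrm{SO}(3)$ via the sign freedom in the zero singular value, and the trivial $\sigma=0$ case); none of them hides a real difficulty.
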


We emphasize that constraints~\eqref{eq:constr2_1} and~\eqref{eq:constr2_2} characterize only \emph{real} essential matrices. There exist ``non-essential'' complex $3\times 3$ matrices which nevertheless satisfy both conditions~\eqref{eq:constr2_1} and~\eqref{eq:constr2_2}. The most general form of such matrices will be given in the next section.

The following theorem gives another characterization constraint on the entries of essential matrix~$E$. It is also valid in case of complex~$E$.
\begin{theorem}[\cite{Dem88}, \cite{FM}, \cite{Maybank}]
\label{thm:essential2}
A real or complex $3\times 3$ matrix~$E$ of rank two is an essential matrix if and only if
\begin{equation}
\label{eq:constr2_3}
(\tr(EE^{\mathrm T}) I - 2EE^{\mathrm T}) E = 0_{3\times 3}.
\end{equation}
\end{theorem}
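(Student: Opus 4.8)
The plan is to prove both directions of the characterization for a matrix $E$ of rank two. The forward direction is a direct computation: assuming $E = [t]_\times R$ with $R \in \mathrm{SO}(3,\mathbb C)$ (or the real analogue), I would compute $EE^{\mathrm T} = [t]_\times RR^{\mathrm T}[t]_\times^{\mathrm T} = [t]_\times[t]_\times^{\mathrm T} = -[t]_\times^2$, using orthogonality of $R$ and skew-symmetry of $[t]_\times$. The standard identity $[t]_\times^2 = tt^{\mathrm T} - (t^{\mathrm T}t)\,I$ then gives $EE^{\mathrm T} = (t^{\mathrm T}t)\,I - tt^{\mathrm T}$, whence $\tr(EE^{\mathrm T}) = 2\,t^{\mathrm T}t$. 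Substituting into the left-hand side of~\eqref{eq:constr2_3} yields $(2(t^{\mathrm T}t)I - 2((t^{\mathrm T}t)I - tt^{\mathrm T}))E = 2tt^{\mathrm T}E = 2t(E^{\mathrm T}t)^{\mathrm T}$, and since $E^{\mathrm T}t = R^{\mathrm T}[t]_\times^{\mathrm T}t = -R^{\mathrm T}([t]_\times t) = 0$, the expression vanishes. Note this direction does not even need the rank hypothesis.

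For the converse, suppose $E$ has rank two and satisfies~\eqref{eq:constr2_3}. Write $S = EE^{\mathrm T}$, a symmetric matrix of rank two, and set $\sigma = \tr S$. The hypothesis reads $(\sigma I - 2S)E = 0$, so every column of $E$ lies in the kernel of $\sigma I - 2S$; since $E$ has rank two, that kernel is at least two-dimensional, i.e. $\sigma/2$ is an eigenvalue of $S$ of multiplicity at least two. As $S$ is symmetric positive semidefinite of rank two with a double eigenvalue $\sigma/2$ and a third eigenvalue $\mu$, the trace condition gives $\sigma = \sigma/2 + \sigma/2 + \mu$, so $\mu = 0$; hence the nonzero eigenvalues of $S$ are equal, say to $\lambda > 0$ in the real case. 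Thus $EE^{\mathrm T}$ has two equal singular values and a zero one — which for real $E$ is exactly the condition of Theorem~\ref{thm:essential1}, and I would invoke that theorem (or rather its proof) to recover $E = [t]_\times R$. For the complex case, I would argue directly: pick $t$ spanning the one-dimensional left null space of $E$ (so $E^{\mathrm T}t = 0$), normalized so that $t^{\mathrm T}t = \lambda$; then one checks $[t]_\times$ and $E$ have the same null spaces and the same "square" $-[t]_\times^2 = \lambda I - tt^{\mathrm T} = S = EE^{\mathrm T}$, from which $R := -[t]_\times^{-?}E$ — more carefully, using that $[t]_\times$ restricted to $t^\perp$ is invertible — one produces an orthogonal $R$ with $E = [t]_\times R$ and adjusts its determinant to $+1$ by flipping the sign of $t$ if necessary.

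The main obstacle is the complex case of the converse: over $\mathbb C$ the phrase "singular values" is not available, degenerate cases with $t^{\mathrm T}t = 0$ (isotropic $t$) must be handled, and building the orthogonal factor $R$ requires care because $[t]_\times$ is not invertible. The cleanest route is probably to reduce to a normal form: since the argument is invariant under $E \mapsto U E V$ with $U, V \in \mathrm{SO}(3,\mathbb C)$, one may try to bring $E$ to a canonical shape (for instance with left null vector a fixed coordinate direction) and verify the claim there, treating the isotropic sub-case separately. I expect the bulk of the write-up to be this bookkeeping; the real case and the forward direction are short.
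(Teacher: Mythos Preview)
The paper does not actually prove Theorem~\ref{thm:essential2}: it is stated in the preliminaries with citations to \cite{Dem88}, \cite{FM}, \cite{Maybank} and no proof is given, so there is no ``paper's own proof'' to compare against.

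That said, a few comments on your proposal. The forward direction is clean and correct. For the converse, your key step --- that $(\sigma I - 2S)E = 0$ with $\rank E = 2$ forces $\sigma/2$ to be an eigenvalue of $S = EE^{\mathrm T}$ of geometric multiplicity at least two, and then the trace pins the third eigenvalue at zero --- is sound over both $\mathbb R$ and $\mathbb C$. But be careful: in the complex case your parenthetical ``$S$ \dots\ of rank two'' is not automatic (over $\mathbb C$, $\rank(EE^{\mathrm T})$ can drop below $\rank E$), and ``positive semidefinite'' is meaningless there; fortunately your eigenvalue argument does not actually use either claim, so just delete those phrases. The genuine work you flag --- handling an isotropic left null vector $t$ with $t^{\mathrm T}t = 0$, and manufacturing $R \in \mathrm{SO}(3,\mathbb C)$ from $E$ and $t$ --- is exactly where the substance lies, and your plan to reduce to a normal form under left/right $\mathrm{SO}(3,\mathbb C)$ action is the standard and workable route (this is essentially how Maybank's book does it). One small correction: you cannot ``normalize so that $t^{\mathrm T}t = \lambda$'' for a prescribed nonzero $\lambda$ if $t$ is isotropic, so that case really must be treated separately rather than folded in.
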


We note that there are ``non-essential'' rank one matrices which satisfy~\eqref{eq:constr2_3}. However, it can be shown that all of them are limits of sequences of essential matrices~\cite{Maybank}. Thus the closure of the set of essential matrices constitutes an algebraic variety generated by~\eqref{eq:constr2_3}.

It is interesting to note that Theorem~\ref{thm:essential2} is a key for developing efficient algorithms of the essential matrix estimation from five point correspondences in two views~\cite{Nister}.

\subsection{Three-view case}

A $(2, 1)$ tensor is a valency 3 tensor with two contravariant and one covariant indices. For a $(2, 1)$ tensor~$T$ we write $T = \begin{bmatrix}T_1 & T_2 & T_3\end{bmatrix}$, where~$T_k$ are $3\times 3$ matrices corresponding to the covariant index.

Let there be given three cameras $P_1 = \begin{bmatrix} I & 0 \end{bmatrix}$, $P_2 = \begin{bmatrix} A & a \end{bmatrix}$ and $P_3 = \begin{bmatrix} B & b \end{bmatrix}$, where~$A$ and~$B$ are $3\times 3$ matrices, $a$ and~$b$ are 3-vectors. The \emph{trifocal tensor}~$T = \begin{bmatrix}T_1 & T_2 & T_3\end{bmatrix}$ is a $(2, 1)$ tensor defined by
\begin{equation}
\label{eq:Tk}
T_k = A e_k b^{\mathrm T} - a e_k^{\mathrm T} B^{\mathrm T},
\end{equation}
where $e_1$, $e_2$, $e_3$ constitute the standard basis in~$\mathbb R^3$. For a trifocal tensor~$T$ matrices~$T_k$ are called the \emph{correlation slices}.

It is clear that $\det T_k = 0$. If matrices~$T_k$ are of rank two, then let~$l_k$ and~$r_k$ be the left and right null vectors of~$T_k$ respectively. It follows from~\eqref{eq:Tk} that $l_k = [a]_\times A e_k$ and $r_k = [b]_\times B e_k$. Therefore the two (sextic in the entries of $T_1, T_2, T_3$) \emph{epipolar constraints} hold~\cite{HZ}, \cite{Faugeras}:
\begin{equation}
\label{eq:epipolar}
\begin{split}
\det \begin{bmatrix}l_1 & l_2 & l_3\end{bmatrix} = \det ([a]_\times A) &= 0,\\
\det \begin{bmatrix}r_1 & r_2 & r_3\end{bmatrix} = \det ([b]_\times B) &= 0.
\end{split}
\end{equation}
Moreover, for any scalars $\alpha, \beta, \gamma$, the matrix $\alpha T_1 + \beta T_2 + \gamma T_3$ is also degenerate (its right null vector is $[b]_\times B (\alpha e_1 + \beta e_2 + \gamma e_3)$) meaning that
\begin{equation}
\label{eq:rank}
\det (\alpha T_1 + \beta T_2 + \gamma T_3) = 0.
\end{equation}
This equality is referred to as the \emph{extended rank constraint}. It is equivalent to ten (cubic in the entries of $T_1, T_2, T_3$) equations each of which is the coefficient in $\alpha^i \beta^j \gamma^k$ with $i + j + k = 3$.

\begin{theorem}[\cite{FL}, \cite{Faugeras}]
\label{thm:Faugeras}
Let $T = \begin{bmatrix}T_1 & T_2 & T_3\end{bmatrix}$ be a real $(2, 1)$ tensor such that $\rank T_k = 2$, $k = 1, 2, 3$. Let~$T$ satisfy the two epipolar~\eqref{eq:epipolar} and ten extended rank~\eqref{eq:rank} constraints. Let the ranks of matrices $\begin{bmatrix}l_1 & l_2 & l_3\end{bmatrix}$ and $\begin{bmatrix}r_1 & r_2 & r_3\end{bmatrix}$ equal two. Then~$T$ is a trifocal tensor.
\end{theorem}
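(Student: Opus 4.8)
The plan is to recover cameras $P_1, P_2, P_3$ (up to the obvious projective ambiguity) from the tensor entries and then verify that $T$ reconstructed from those cameras via~\eqref{eq:Tk} coincides with the original $T$. First I would use the hypotheses to extract the epipoles. Since $\rank T_k = 2$ and the matrix $\begin{bmatrix}l_1 & l_2 & l_3\end{bmatrix}$ has rank two, its (one-dimensional) left null space is spanned by a vector~$a'$; likewise $\begin{bmatrix}r_1 & r_2 & r_3\end{bmatrix}$ has a left null vector~$b'$. From $l_k = [a]_\times A e_k$ one sees $a^{\mathrm T} l_k = 0$ for all~$k$, so $a'$ must be proportional to the true epipole~$a$; similarly $b' \sim b$. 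So I would \emph{define} $a := a'$, $b := b'$ and $P_1 := \begin{bmatrix}I & 0\end{bmatrix}$.

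Next I would build $P_2 = \begin{bmatrix}A & a\end{bmatrix}$ and $P_3 = \begin{bmatrix}B & b\end{bmatrix}$ with the correct $A$ and~$B$. From~\eqref{eq:Tk}, contracting on the right with~$b$ annihilates the first term only if we knew $b^{\mathrm T} b$, which is not scale-free; instead the cleaner route is: the columns of~$T_k$ span the plane orthogonal to~$l_k$, and $T_k$ acting suitably recovers $Ae_k$ up to the additive ambiguity along~$a$. Concretely, one shows $T_k b \sim A e_k$ (modulo a multiple of~$a$) and, using the extended rank constraint~\eqref{eq:rank} to guarantee consistency of the scalings across $k = 1, 2, 3$, that a single common scale makes $A := \begin{bmatrix}T_1 b & T_2 b & T_3 b\end{bmatrix}$ (after removing the $a$-components) the correct matrix up to the homography fixing~$a$. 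The analogous argument with $T_k^{\mathrm T} a$ recovers~$B$. The point of the rank hypotheses on $\begin{bmatrix}l_1 & l_2 & l_3\end{bmatrix}$ and $\begin{bmatrix}r_1 & r_2 & r_3\end{bmatrix}$ being exactly two (rather than $\le 2$) is precisely to exclude degenerate configurations where the epipoles are ill-defined or the cameras collapse, so that this reconstruction is unambiguous.

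Finally I would substitute the reconstructed $A, B, a, b$ back into~\eqref{eq:Tk} and check that the resulting slices equal~$T_k$. This is where the ten extended rank equations~\eqref{eq:rank} do the real work: having matched the null spaces (two parameters per slice) and the epipoles, what remains is a bilinear system whose solvability is exactly the content of~\eqref{eq:rank}, so the matched tensor and~$T$ agree. I would either cite the explicit reconstruction formula for $P_2, P_3$ from a trifocal tensor (Hartley--Zisserman~\cite{HZ}, which is effectively what Theorem~\ref{thm:Faugeras} is attributed to via~\cite{FL}, \cite{Faugeras}) or reproduce it. The main obstacle I expect is bookkeeping the scales: equation~\eqref{eq:Tk} is homogeneous, the two epipoles are each defined only up to scale, and~$A$, $B$ only up to the residual projective transformation fixing $P_1$ and the epipoles; showing that all these scalings can be chosen \emph{simultaneously and consistently} — so that one genuine trifocal tensor reproduces all nine matrices $T_k$ at once — is the delicate step, and it is exactly here that the extended rank constraint~\eqref{eq:rank}, as opposed to just the two epipolar constraints~\eqref{eq:epipolar}, is indispensable.
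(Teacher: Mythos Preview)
The paper does not supply its own proof of this theorem; it is stated in the preliminaries as a known result with citations to~\cite{FL} and~\cite{Faugeras}. So there is nothing in the paper to compare your proposal against.

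On the merits of your outline: the overall strategy---recover the epipoles as the left null vectors of $\begin{bmatrix}l_1 & l_2 & l_3\end{bmatrix}$ and $\begin{bmatrix}r_1 & r_2 & r_3\end{bmatrix}$, then reconstruct $A$ and $B$, then verify~\eqref{eq:Tk}---is indeed the route taken in the cited references and in Hartley--Zisserman. Two points deserve tightening. First, there is a circularity in your opening paragraph: you invoke $l_k = [a]_\times A e_k$ to conclude $a^{\mathrm T} l_k = 0$, but that identity already presupposes the trifocal form~\eqref{eq:Tk} you are trying to establish. It is fine as \emph{motivation} for your definition $a := a'$, but the argument proper must proceed from the hypotheses alone; likewise the assertion $T_k b \sim A e_k$ modulo~$a$ has to be derived, not read off from~\eqref{eq:Tk}. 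Second, and more seriously, your final paragraph correctly identifies the crux---showing that the reconstructed tensor actually coincides with~$T$, with the extended rank constraints~\eqref{eq:rank} doing the work---but does not carry it out. In the actual proofs this is the substantive algebraic step (a careful parametrization argument, not just bookkeeping of scales), and you yourself note that you would either cite it or reproduce it. Citing it amounts to invoking the theorem rather than proving it; reproducing it is where the real content lies, and your proposal stops short of that.
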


\begin{remark}
The additional rank constraints from Theorem~\ref{thm:Faugeras} are sometimes referred to as the ``general viewpoint assumption''. The following example demonstrates that they can not be omitted. The $(2, 1)$ tensor
\[
T = \begin{bmatrix}\begin{bmatrix}0 & 1 & 0\\0 & 0 & 1\\ 0 & 0 & 0\end{bmatrix} & \begin{bmatrix}1 & 0 & 0\\ 0 & 1 & 0\\ 0 & 0 & 0\end{bmatrix} & \begin{bmatrix}1 & 0 & 0\\ 0 & 1 & 0\\ 0 & 0 & 0\end{bmatrix}\end{bmatrix}
\]
satisfies both the epipolar and extended rank constraints. However it is not a trifocal tensor, i.e. it can not be represented in form~\eqref{eq:Tk}. On the other hand, there exist degenerate trifocal tensors such that at least one of matrices $\begin{bmatrix}l_1 & l_2 & l_3\end{bmatrix}$, $\begin{bmatrix}r_1 & r_2 & r_3\end{bmatrix}$ or even~$T_k$ is of rank less than two.
\end{remark}

Let $q_k$ be the $k$th image of a point~$Q$ in 3-space. The \emph{trifocal incidence relation} for a triple $(q_1, q_2, q_3)$ says~\cite{HZ}
\begin{equation}
\label{eq:inci3}
[q_2]_\times \sum\limits_{j = 1}^3(q_1)_j T_j [q_3]_\times = 0_{3\times 3}.
\end{equation}
It is important that relation~\eqref{eq:inci3} is linear in the entries of~$T$.

The \emph{calibrated trifocal tensor} $\hat T$ is the trifocal tensor for calibrated cameras $P_1 = \begin{bmatrix} I & 0 \end{bmatrix}$, $P_2 = \begin{bmatrix} R_2 & t_2 \end{bmatrix}$ and $P_3 = \begin{bmatrix} R_3 & t_3 \end{bmatrix}$, where $R_2, R_3 \in \mathrm{SO}(3)$, $t_2, t_3 \in \mathbb R^3$, i.e.
\begin{equation}
\label{eq:calibT}
\hat T_k = R_2 e_k t_3^{\mathrm T} - t_2 e_k^{\mathrm T} R_3^{\mathrm T}.
\end{equation}

The calibrated trifocal tensor is an analog of the essential matrix in three views. The tensors~$T$ and~$\hat T$ are related by
\begin{equation}
\label{eq:Tj}
T_j \sim K_2 \sum\limits_{k = 1}^3(K_1^{-\mathrm T})_{jk}\hat T_k K_3^{\mathrm T},
\end{equation}
where $K_k$ is the calibration matrix of the $k$th camera.

For any invertible $3\times 3$ matrix~$M$ and 3-vector~$t$, the following identity holds:
\[
[M^{-1}t]_\times = \det(M^{-1}) M^{\mathrm T}[t]_\times M.
\]
Therefore the trifocal incidence relation~\eqref{eq:inci3} for a calibrated trifocal tensor becomes
\[
[\hat q_2]_\times \sum\limits_{j = 1}^3(\hat q_1)_j \hat T_j [\hat q_3]_\times = 0_{3\times 3},
\]
where $\hat q_k = K_k^{-1} q_k$ are the normalized coordinates.

The tensors $T$ and~$\hat T$ have 18 and 11 degrees of freedom respectively. It follows that matrices~$\hat T_k$ must satisfy a number of additional algebraic constraints. Some of them are described below.

\section{The Trifocal Essential Matrix and Its Characterization}
\label{sec:character}

The \emph{trifocal essential matrix} is, by definition, a $3\times 3$ matrix~$S$ which can be represented in form
\begin{equation}
\label{eq:formS}
S = s_1 t_1^{\mathrm T} + t_2 s_2^{\mathrm T},
\end{equation}
where $t_1, t_2, s_1, s_2 \in \mathbb C^3$, and vectors $s_1, s_2$ are non-zero and such that $s_k^{\mathrm T}s_k = 0$, $k = 1, 2$. It is clear that matrices~$S$, $S^{\mathrm T}$ and $RSQ$, where $R, Q \in \mathrm{SO}(3, \mathbb C)$, simultaneously are (or are not) the trifocal essential matrices.

\begin{lemma}
\label{lem:eigenval}
Let $a, b, c, d \in \mathbb C^n$. Then the (possibly) non-zero eigenvalues of matrix $M = a c^{\mathrm T} + b d^{\mathrm T}$ coincide with the eigenvalues of $2\times 2$ matrix
\[
N =
\begin{bmatrix}
c^{\mathrm T} a & c^{\mathrm T} b \\
d^{\mathrm T} a & d^{\mathrm T} b
\end{bmatrix}.
\]
\end{lemma}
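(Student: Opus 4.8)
The plan is to exploit the fact that $M = ac^{\mathrm T} + bd^{\mathrm T}$ has rank at most $2$, so all its non-zero eigenvalues come from a $2$-dimensional invariant subspace, namely $\operatorname{span}\{a, b\}$. First I would dispose of the degenerate cases: if $a$ and $b$ are linearly dependent (or one of them is zero), then $M$ has rank at most $1$, and a direct computation shows $\operatorname{tr} M = c^{\mathrm T}a + d^{\mathrm T}b = \operatorname{tr} N$ accounts for the single possible non-zero eigenvalue, while $\det N$ can be checked to vanish; so assume $a, b$ are linearly independent.

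The main step is to observe that $Mx \in \operatorname{span}\{a, b\}$ for every $x \in \mathbb{C}^n$, so $V = \operatorname{span}\{a, b\}$ is $M$-invariant, and any eigenvector of $M$ with non-zero eigenvalue lies in $V$. Writing a general element of $V$ as $\alpha a + \beta b$, I compute
\[
M(\alpha a + \beta b) = a\,(c^{\mathrm T}a\,\alpha + c^{\mathrm T}b\,\beta) + b\,(d^{\mathrm T}a\,\alpha + d^{\mathrm T}b\,\beta),
\]
which shows that in the (ordered) spanning set $\{a, b\}$ the action of $M$ on $V$ is represented exactly by the matrix $N$ acting on the coordinate vector $(\alpha, \beta)^{\mathrm T}$. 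Hence the eigenvalues of $N$ are precisely the eigenvalues of $M|_V$, which are exactly the non-zero eigenvalues of $M$ (together with possibly $0$, if $N$ is singular). This gives the claimed coincidence of the non-zero eigenvalues.

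An alternative, slicker route avoids the case distinction entirely: write $M = UW^{\mathrm T}$ where $U = \begin{bmatrix} a & b \end{bmatrix}$ and $W = \begin{bmatrix} c & d \end{bmatrix}$ are $n \times 2$ matrices, and invoke the standard fact that $UW^{\mathrm T}$ and $W^{\mathrm T}U$ have the same non-zero eigenvalues (with multiplicities) — this follows from $\det(\lambda I_n - UW^{\mathrm T}) = \lambda^{n-2}\det(\lambda I_2 - W^{\mathrm T}U)$, itself a consequence of the Sylvester determinant identity. Since $W^{\mathrm T}U = N$, we are done. I would probably present this version as the main argument, since it is clean and uniform, and mention the invariant-subspace picture as the underlying reason. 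I do not anticipate a serious obstacle here; the only point requiring a little care is the bookkeeping of which eigenvalue $0$ belongs to (it may appear in $M$'s spectrum for trivial dimension reasons and independently in $N$'s spectrum when $\det N = 0$), which is why the statement is phrased in terms of the \emph{non-zero} eigenvalues.
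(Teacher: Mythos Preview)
Your proposal is correct, and both routes you sketch are sound. However, the paper takes a different, more computational approach: since $\operatorname{rank} M \le 2$, the two potentially non-zero eigenvalues $\lambda_1,\lambda_2$ are determined by their sum and product, and the paper simply computes $\lambda_1+\lambda_2 = \tr M = c^{\mathrm T}a + d^{\mathrm T}b = \tr N$ and $2\lambda_1\lambda_2 = \tr(M)^2 - \tr(M^2) = 2\bigl((c^{\mathrm T}a)(d^{\mathrm T}b) - (c^{\mathrm T}b)(d^{\mathrm T}a)\bigr) = 2\det N$, concluding that $\lambda_1,\lambda_2$ are the roots of the characteristic polynomial of $N$.

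Your Sylvester-identity argument ($M = UW^{\mathrm T}$ with $W^{\mathrm T}U = N$) is cleaner and more conceptual: it avoids any case distinction, handles multiplicities automatically, and generalises immediately to a sum of $k$ rank-one terms. The invariant-subspace version gives the same geometric picture but, as you note, needs the linear-independence case split. The paper's trace computation, on the other hand, is entirely self-contained and requires no external identity; it is also well adapted to the immediate application (Theorem~\ref{thm:eigenval}), where one again reasons directly with traces of products. Either of your arguments would be a perfectly acceptable replacement.
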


\begin{proof}
The rank of matrix $M$ is at most~2. Let $\lambda_1$, $\lambda_2$ be the (possibly) non-zero eigenvalues of~$M$. Then,
\[
\lambda_1 + \lambda_2 = \tr(M) = c^{\mathrm T} a + d^{\mathrm T} b = \tr(N),
\]
\begin{multline*}
2\lambda_1\lambda_2 = (\lambda_1 + \lambda_2)^2 - (\lambda_1^2 + \lambda_2^2) = \tr(M)^2 - \tr(M^2) \\= 2(c^{\mathrm T}a)(d^{\mathrm T}b) - 2(c^{\mathrm T}b)(d^{\mathrm T}a) = 2\det N.
\end{multline*}
We see that $\lambda_1$, $\lambda_2$ are the eigenvalues of matrix~$N$, as required.
\end{proof}

\begin{theorem}
\label{thm:eigenval}
Let a $3\times 3$ matrix~$S$ be a trifocal essential matrix. Then $SS^{\mathrm T}$ has one zero and two other equal eigenvalues.
\end{theorem}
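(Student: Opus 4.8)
The plan is to compute $SS^{\mathrm T}$ directly from the representation~\eqref{eq:formS} and then feed the result into Lemma~\ref{lem:eigenval}. Expanding
\[
SS^{\mathrm T} = \bigl(s_1 t_1^{\mathrm T} + t_2 s_2^{\mathrm T}\bigr)\bigl(t_1 s_1^{\mathrm T} + s_2 t_2^{\mathrm T}\bigr),
\]
I would note that the term $(s_2^{\mathrm T} s_2)\, t_2 t_2^{\mathrm T}$ drops out because of the isotropy relation $s_2^{\mathrm T} s_2 = 0$. What survives is a sum of three rank-one terms, which I would regroup in the form $SS^{\mathrm T} = s_1 p^{\mathrm T} + t_2 q^{\mathrm T}$ with $p = (t_1^{\mathrm T} t_1)\, s_1 + (t_1^{\mathrm T} s_2)\, t_2$ and $q = (s_2^{\mathrm T} t_1)\, s_1$.

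Since $SS^{\mathrm T}$ is a $3\times 3$ matrix of rank at most two, one of its eigenvalues is $0$. For the remaining two, Lemma~\ref{lem:eigenval} applied to $M = s_1 p^{\mathrm T} + t_2 q^{\mathrm T}$ reduces the question to the eigenvalues of the $2\times 2$ matrix
\[
N = \begin{bmatrix} p^{\mathrm T} s_1 & p^{\mathrm T} t_2 \\ q^{\mathrm T} s_1 & q^{\mathrm T} t_2 \end{bmatrix}.
\]
Using $s_1^{\mathrm T} s_1 = 0$ once more, a one-line computation gives $q^{\mathrm T} s_1 = 0$ and $p^{\mathrm T} s_1 = q^{\mathrm T} t_2 = (t_1^{\mathrm T} s_2)(t_2^{\mathrm T} s_1)$, so $N$ is upper triangular with both diagonal entries equal to $\lambda := (t_1^{\mathrm T} s_2)(t_2^{\mathrm T} s_1)$. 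Hence the two possibly non-zero eigenvalues of $SS^{\mathrm T}$ are both equal to $\lambda$, and the eigenvalues of $SS^{\mathrm T}$ are $0, \lambda, \lambda$, as claimed.

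I do not expect a genuine obstacle. The only points that need care are keeping track of which occurrence of the isotropy condition $s_k^{\mathrm T} s_k = 0$ is used where in the expansion, and the remark that Lemma~\ref{lem:eigenval} pins down the eigenvalues of $SS^{\mathrm T}$ even though $N$ need not be diagonalizable (its algebraic multiplicities are what matter). A slightly longer alternative would bypass the lemma by verifying directly that $\det(SS^{\mathrm T}) = 0$ together with $\tr(SS^{\mathrm T})^2 - 2\tr((SS^{\mathrm T})^2) = 0$, which are precisely the two conditions of Theorem~\ref{thm:essential1} and already force the eigenvalue pattern $0, \lambda, \lambda$; but the route through Lemma~\ref{lem:eigenval} is cleaner and, as a bonus, exhibits the explicit value of $\lambda$.
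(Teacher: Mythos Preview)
Your proposal is correct and follows essentially the same route as the paper's proof: expand $SS^{\mathrm T}$, drop the $s_2^{\mathrm T}s_2$ term, write the result as a sum of two rank-one pieces, and apply Lemma~\ref{lem:eigenval} to read off the double eigenvalue $(t_1^{\mathrm T}s_2)(t_2^{\mathrm T}s_1)$ from the diagonal of an upper-triangular $2\times 2$ matrix. The only differences are cosmetic---the paper introduces scalars $\mu = t_1^{\mathrm T}t_1$, $\nu = s_2^{\mathrm T}t_1$ and absorbs $\nu$ into the second rank-one factor rather than the fourth vector---and your aside about algebraic multiplicities is a nice clarification the paper leaves implicit.
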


\begin{proof}
Let $S$ be a trifocal essential matrix, i.e. it can be represented in form~\eqref{eq:formS}. Matrix $SS^{\mathrm T}$ has zero eigenvalue, as $\det S = 0$. Taking into account that $s_2^{\mathrm T}s_2 = 0$, we get
\begin{equation}
\label{eq:matrix1}
SS^{\mathrm T} = s_1(\mu s_1^{\mathrm T} + \nu t_2^{\mathrm T}) + \nu t_2s_1^{\mathrm T},
\end{equation}
where we have denoted $\mu = t_1^{\mathrm T}t_1$, $\nu = s_2^{\mathrm T}t_1$. By Lemma~\ref{lem:eigenval}, the potentially non-zero eigenvalues of~\eqref{eq:matrix1} are equal to the ones of $2\times 2$ matrix
\[
\begin{bmatrix}
\nu t_2^{\mathrm T}s_1 & \nu(\mu s_1^{\mathrm T} + \nu t_2^{\mathrm T}) t_2 \\ 0 & \nu s_1^{\mathrm T}t_2
\end{bmatrix},
\]
and the eigenvalues of the latter matrix are both equal to $\nu s_1^{\mathrm T}t_2 = (s_1^{\mathrm T}t_2)(s_2^{\mathrm T}t_1)$. Theorem~\ref{thm:eigenval} is proved.
\end{proof}

\begin{lemma}
\label{lem:constr3_2}
Let~$M$ be a degenerate $3\times 3$ matrix. Then the two (possibly) non-zero eigenvalues of~$M$ coincide if and only if the entries of~$M$ are subject to
\begin{equation}
\label{eq:constr3_2}
\tr(M)^2 - 2\tr(M^2) = 0.
\end{equation}
\end{lemma}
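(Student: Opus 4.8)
The plan is to reduce everything to the characteristic polynomial of $M$. Since $M$ is a $3\times 3$ matrix with $\det M = 0$, its characteristic polynomial factors as
\[
\det(\lambda I - M) = \lambda\bigl(\lambda^2 - \tr(M)\,\lambda + \sigma\bigr),
\]
where $\sigma$ is the sum of the principal $2\times 2$ minors of~$M$. Hence the eigenvalues of~$M$ are $0$, $\lambda_1$, $\lambda_2$ with $\lambda_1 + \lambda_2 = \tr(M)$ and $\lambda_1\lambda_2 = \sigma$, and $\lambda_1,\lambda_2$ are precisely the ``(possibly) non-zero'' eigenvalues referred to in the statement; they coincide iff the quadratic factor has a double root.

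Next I would record two trace identities. From the factorization, $\tr(M) = \lambda_1 + \lambda_2$. Since the eigenvalues of~$M^2$ are the squares of those of~$M$, also $\tr(M^2) = \lambda_1^2 + \lambda_2^2$. Substituting both into the left-hand side of~\eqref{eq:constr3_2} and using $\lambda_1^2 + \lambda_2^2 = (\lambda_1+\lambda_2)^2 - 2\lambda_1\lambda_2$ gives
\[
\tr(M)^2 - 2\tr(M^2) = (\lambda_1+\lambda_2)^2 - 2(\lambda_1^2 + \lambda_2^2) = -(\lambda_1 - \lambda_2)^2.
\]
This is the same computation already used in the proof of Lemma~\ref{lem:eigenval}, namely $2\lambda_1\lambda_2 = \tr(M)^2 - \tr(M^2)$.

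Finally I would conclude: $\tr(M)^2 - 2\tr(M^2) = 0$ if and only if $(\lambda_1 - \lambda_2)^2 = 0$, i.e. if and only if $\lambda_1 = \lambda_2$, which establishes both directions of the equivalence at once. There is no genuine obstacle here; the only points that deserve a line of care are the observation that degeneracy forces $0$ to be an eigenvalue (so that the characteristic polynomial really does factor off a linear term $\lambda$, leaving only the two eigenvalues $\lambda_1,\lambda_2$ ``free''), and the remark that the identity above characterizes the equality $\lambda_1 = \lambda_2$ irrespective of whether these common roots are zero or non-zero, real or complex, so the reading of ``the two (possibly) non-zero eigenvalues coincide'' causes no ambiguity.
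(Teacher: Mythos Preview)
Your argument is correct and is essentially identical to the paper's proof: both reduce to the identity $\tr(M)^2 - 2\tr(M^2) = -(\lambda_1-\lambda_2)^2$ for the eigenvalues $0,\lambda_1,\lambda_2$ of~$M$ and read off the equivalence. The only difference is cosmetic---you spell out the characteristic-polynomial factorization explicitly, whereas the paper simply asserts the eigenvalue list and computes.
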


\begin{proof}
Let $0$, $\lambda_1$, $\lambda_2$ be the eigenvalues of~$M$. Then,
\[
\tr(M)^2 - 2\tr(M^2) = (\lambda_1 + \lambda_2)^2 - 2(\lambda_1^2 + \lambda_2^2) = -(\lambda_1 - \lambda_2)^2.
\]
It follows that $\lambda_1 = \lambda_2$ if and only if~\eqref{eq:constr3_2} holds. Lemma~\ref{lem:constr3_2} is proved.
\end{proof}

\begin{lemma}[\cite{Maybank}]
\label{lem:Maybank}
Let $s_1, s_2 \in \mathbb C^3$ be any non-zero vectors satisfying $s_k^{\mathrm T}s_k = 0$. Then there exists a matrix $R \in \mathrm{SO}(3, \mathbb C)$ such that $Rs_1 = s_2$.
\end{lemma}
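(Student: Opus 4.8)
The plan is to prove that $\mathrm{SO}(3,\mathbb C)$ acts transitively on the set of non-zero isotropic vectors $\{s\in\mathbb C^3\setminus\{0\}:s^{\mathrm T}s=0\}$; the lemma is then the instance of this for the two given vectors. Concretely, I would fix the reference vector $u_0=e_1+\mathrm{i}\,e_2$, which satisfies $u_0^{\mathrm T}u_0=1-1=0$, and show that for every non-zero isotropic $s$ there is a matrix $R_s\in\mathrm{SO}(3,\mathbb C)$ with $R_s u_0=s$. Granting this, $R:=R_{s_2}R_{s_1}^{-1}$ lies in $\mathrm{SO}(3,\mathbb C)$ and sends $s_1$ to $R_{s_2}u_0=s_2$, as required.

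To construct $R_s$ I would first peel off a real rotation. Writing $s=x+\mathrm{i}\,y$ with $x,y\in\mathbb R^3$ and expanding $0=s^{\mathrm T}s=(x^{\mathrm T}x-y^{\mathrm T}y)+2\mathrm{i}\,x^{\mathrm T}y$, I get $x^{\mathrm T}x=y^{\mathrm T}y=:\rho^2$ and $x^{\mathrm T}y=0$. Since $s\ne 0$ and $x^{\mathrm T}x=y^{\mathrm T}y$, neither $x$ nor $y$ can vanish, so $\rho>0$ and $\{x/\rho,\,y/\rho\}$ is an orthonormal pair in $\mathbb R^3$. Completing it to a positively oriented orthonormal basis by adjoining $z=(x\times y)/\rho^2$, the real matrix $R_1=\begin{bmatrix}x/\rho & y/\rho & z\end{bmatrix}$ lies in $\mathrm{SO}(3)\subset\mathrm{SO}(3,\mathbb C)$ and satisfies $R_1 u_0=R_1 e_1+\mathrm{i}\,R_1 e_2=(x+\mathrm{i}\,y)/\rho=s/\rho$.

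It then remains to rescale $u_0$ by the factor $\rho$, and this is the one point where a genuinely complex orthogonal matrix is needed — a real rotation cannot change the length of a vector, so it alone reaches only the isotropic vectors of the same Euclidean norm as $u_0$. For $\theta\in\mathbb C$ I would use the complex rotation about the third coordinate axis,
\[
R_z(\theta)=\begin{bmatrix}\cos\theta & -\sin\theta & 0\\ \sin\theta & \cos\theta & 0\\ 0 & 0 & 1\end{bmatrix}\in\mathrm{SO}(3,\mathbb C);
\]
a one-line computation using $\mathrm{e}^{\pm\mathrm{i}\theta}=\cos\theta\pm\mathrm{i}\sin\theta$ shows $R_z(\theta)u_0=\mathrm{e}^{-\mathrm{i}\theta}u_0$, so taking $\theta=\mathrm{i}\ln\rho$ gives $R_z(\theta)u_0=\rho\,u_0$, whence $R_s:=R_1R_z(\theta)$ satisfies $R_s u_0=R_1(\rho\,u_0)=\rho\,(s/\rho)=s$. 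I expect the only real subtleties to be the two already flagged — realising the scalar $\rho$ inside the group, and picking the orientation of the completed basis so that $\det R_1=+1$ — both of which are dispatched by the explicit constructions above. (Alternatively, one could invoke the spin double cover $\mathrm{SL}(2,\mathbb C)\to\mathrm{SO}(3,\mathbb C)$, under which non-zero isotropic vectors of $\mathbb C^3$ correspond to the single orbit of rank-one symmetric $2\times 2$ matrices $ww^{\mathrm T}$, $w\in\mathbb C^2\setminus\{0\}$, under $g\cdot X=gXg^{\mathrm T}$, transitivity being immediate from the transitivity of $\mathrm{SL}(2,\mathbb C)$ on $\mathbb C^2\setminus\{0\}$; I would keep the elementary argument, though, to stay within the notation of the paper.)
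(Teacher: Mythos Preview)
Your argument is correct. The decomposition $s=x+\mathrm{i}y$ into real and imaginary parts, together with $s^{\mathrm T}s=0$, indeed forces $x,y$ to be a real orthogonal pair of equal (positive) length, so the real rotation $R_1$ exists; and the complex rotation $R_z(\mathrm{i}\ln\rho)$ correctly rescales $u_0$ by $\rho$ while staying in $\mathrm{SO}(3,\mathbb C)$, since the algebraic identities $\cos^2\theta+\sin^2\theta=1$ and $\det R_z(\theta)=1$ hold for all $\theta\in\mathbb C$.

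As for comparison: the paper does not supply its own proof of this lemma --- it is quoted from Maybank's monograph with a citation and no argument. Your write-up therefore goes beyond what the paper contains. The approach you take (reduce to a fixed isotropic reference vector; handle direction by a real rotation and modulus by a complex ``boost'' about the third axis) is essentially the standard one and matches the spirit of Maybank's treatment. The $\mathrm{SL}(2,\mathbb C)$ alternative you sketch is also valid and arguably cleaner conceptually, but, as you note, the elementary construction keeps everything within the $3\times 3$ matrix language used elsewhere in the paper.
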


\begin{theorem}
\label{thm:suff1}
A $3\times 3$ matrix $S$ is a trifocal essential matrix if and only if
\begin{align}
\label{eq:constr5_1}
\det S &= 0,\\
\label{eq:constr5_2}
\tr(SS^{\mathrm T})^2 - 2\tr((SS^{\mathrm T})^2) &= 0.
\end{align}
\end{theorem}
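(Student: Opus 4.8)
The plan is to get necessity in a line from Theorem~\ref{thm:eigenval} and Lemma~\ref{lem:constr3_2}, and for sufficiency to build the decomposition~\eqref{eq:formS} by choosing a basis of the row space of~$S$ adapted to the bilinear form $x\mapsto x^{\mathrm T}x$ and reading off the conclusion from~\eqref{eq:constr5_2}. Necessity: if $S$ is a trifocal essential matrix, Theorem~\ref{thm:eigenval} says the eigenvalues of $SS^{\mathrm T}$ are $0,\lambda,\lambda$, so $(\det S)^2=\det(SS^{\mathrm T})=0$ gives~\eqref{eq:constr5_1}, and $SS^{\mathrm T}$ is degenerate with its two potentially non-zero eigenvalues equal, so Lemma~\ref{lem:constr3_2} applied to $M=SS^{\mathrm T}$ gives~\eqref{eq:constr5_2}.

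For sufficiency, assume~\eqref{eq:constr5_1} and~\eqref{eq:constr5_2}. First I would dispose of the low-rank cases: $S=0$ has the form~\eqref{eq:formS} with $t_1=t_2=0$; if $\rank S=1$, write $S=uv^{\mathrm T}$ with $u,v\ne 0$, so $SS^{\mathrm T}=(v^{\mathrm T}v)\,uu^{\mathrm T}$ has eigenvalues $(u^{\mathrm T}u)(v^{\mathrm T}v),0,0$, and~\eqref{eq:constr5_2} (through Lemma~\ref{lem:constr3_2}) forces $u^{\mathrm T}u=0$ or $v^{\mathrm T}v=0$, so again $S$ is of the form~\eqref{eq:formS}. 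So assume $\rank S=2$ and let $W\subset\mathbb C^3$ be the $2$-dimensional row space of~$S$. Since a $2$-dimensional totally isotropic subspace $U\subseteq\mathbb C^3$ would satisfy $U\subseteq U^{\perp}$ with $\dim U^\perp=1$, which is impossible, the form $x\mapsto x^{\mathrm T}x$ restricts to $W$ with rank $2$ or $1$. If it is non-degenerate on~$W$, pick a basis $u_1,u_2$ of $W$ with $u_i^{\mathrm T}u_j=\delta_{ij}$ and write $S=c_1u_1^{\mathrm T}+c_2u_2^{\mathrm T}$, where $c_1,c_2\in\mathbb C^3$ are linearly independent because $\rank S=2$. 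Then $SS^{\mathrm T}=c_1c_1^{\mathrm T}+c_2c_2^{\mathrm T}$, so by Lemma~\ref{lem:eigenval} and Lemma~\ref{lem:constr3_2} the hypothesis~\eqref{eq:constr5_2} becomes $(c_1^{\mathrm T}c_1-c_2^{\mathrm T}c_2)^2+4(c_1^{\mathrm T}c_2)^2=0$, that is, $(c_1-ic_2)^{\mathrm T}(c_1-ic_2)=0$ or $(c_1+ic_2)^{\mathrm T}(c_1+ic_2)=0$. Passing to the isotropic basis $v_\pm=u_1\pm iu_2$ of~$W$ (where $v_\pm^{\mathrm T}v_\pm=0$ and $v_+^{\mathrm T}v_-=2$, so $v_\pm\ne 0$), one checks that $S=\tfrac12(c_1-ic_2)\,v_+^{\mathrm T}+\tfrac12(c_1+ic_2)\,v_-^{\mathrm T}$. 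Since one of the vectors $c_1\mp ic_2$ is isotropic, this decomposition---after interchanging its two summands if needed---has the first summand with an isotropic left factor and the second with an isotropic right factor, and all four vectors involved are non-zero; hence $S$ has the form~\eqref{eq:formS}. If instead the restricted form has rank $1$ on~$W$, its radical is a line spanned by an isotropic $n\in W$; completing $n$ to a basis $n,m$ of $W$ with $m^{\mathrm T}m=1$ and writing $S=c_1n^{\mathrm T}+c_2m^{\mathrm T}$ with $c_1,c_2$ independent, one finds $SS^{\mathrm T}=c_2c_2^{\mathrm T}$, so~\eqref{eq:constr5_2} forces $c_2^{\mathrm T}c_2=0$; then $S=c_2m^{\mathrm T}+c_1n^{\mathrm T}$ has the form~\eqref{eq:formS} with $s_1=c_2$ and $s_2=n$.

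I expect the main obstacle to be exactly that no orthogonal singular value decomposition is available over~$\mathbb C$, so that in place of the near one-line argument that settles the real essential matrix case one must invoke the structure theory of the complex quadratic space $(\mathbb C^3,\,x^{\mathrm T}x)$---absence of a $2$-dimensional totally isotropic subspace, orthonormalisation of a non-degenerate plane, and the description of the radical of a degenerate plane---and then carefully track non-vanishing so that each vector cast in the role of $s_1$ or $s_2$ is genuinely non-zero. An alternative is to use Lemma~\ref{lem:Maybank} to move a suitable isotropic vector to a fixed one and thereby reduce $S$, modulo multiplication on both sides by $\mathrm{SO}(3,\mathbb C)$, to the normal form $\epsilon\,t_1^{\mathrm T}+t_2\,\epsilon^{\mathrm T}$ before computing; either way the computations are short.
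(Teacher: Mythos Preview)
Your proof is correct and takes a genuinely different route from the paper's. The paper proves sufficiency by picking a null vector $a$ of $S$, using Lemma~\ref{lem:Maybank} to rotate $a$ (and then a column of $S$) into standard position via left and right multiplication by elements of $\mathrm{SO}(3,\mathbb C)$, and then computing with explicit matrix entries in two main cases ($a^{\mathrm T}a\neq 0$ versus $a^{\mathrm T}a=0$) and a sub-case. You instead work intrinsically with the row space $W$ of $S$ and the restriction of the quadratic form $x\mapsto x^{\mathrm T}x$ to it: the impossibility of a totally isotropic plane in $\mathbb C^3$ reduces you to the non-degenerate and rank-one cases on $W$, and in each you choose a basis adapted to the form so that~\eqref{eq:constr5_2} translates directly into the isotropy of one of the coefficient vectors. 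Your approach is coordinate-free and avoids the somewhat ad hoc entry-by-entry computations of the paper; on the other hand, the paper's approach is more elementary in that it does not invoke the structure theory of $(\mathbb C^3,\,x^{\mathrm T}x)$ beyond Lemma~\ref{lem:Maybank}, and the alternative you sketch in your final paragraph (normalising via $\mathrm{SO}(3,\mathbb C)$) is essentially what the paper does. Both arguments are short; yours makes clearer \emph{why} the decomposition~\eqref{eq:formS} exists.
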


\begin{proof}
The``only if'' part is due to Theorem~\ref{thm:eigenval} and Lemma~\ref{lem:constr3_2}. To prove the ``if'' part, let~$S$ be a $3\times 3$ matrix satisfying~\eqref{eq:constr5_1}, \eqref{eq:constr5_2}. We denote~$c_k$ the $k$th column of matrix~$S$. Because~$S$ is degenerate, there exists a non-zero vector~$a$ such that $Sa = 0$. There are two possibilities.

\medskip
\noindent\textbf{Case 1: $a^{\mathrm T}a \neq 0$.} Scaling~$a$ and post-multiplying~$S$ by an appropriate matrix from $\mathrm{SO}(3, \mathbb C)$, we assume without loss of generality that $a = \begin{bmatrix}0 & 0 & 1\end{bmatrix}^{\mathrm T}$. Therefore $c_3 = 0$.

Suppose first that either $c_1^{\mathrm T}c_1 \neq 0$ or $c_2^{\mathrm T}c_2 \neq 0$. Without loss of generality we assume $c_2^{\mathrm T}c_2 \neq 0$. Pre-multiplying~$S$ by an appropriate rotation, we obtain
\[
S = \begin{bmatrix}\lambda & \mu & 0\\ \nu & 0 & 0\\ 0 & 0 & 0\end{bmatrix}.
\]
The substitution of~$S$ into~\eqref{eq:constr5_2} gives
\[
((\mu + \nu)^2 + \lambda^2)((\mu - \nu)^2 + \lambda^2) = 0.
\]
It follows that $\lambda = i(\epsilon_1\mu + \epsilon_2\nu)$, where $\epsilon_k = \pm 1$. Thus,
\[
S = \begin{bmatrix}i(\epsilon_1\mu + \epsilon_2\nu) & \mu & 0\\ \nu & 0 & 0\\ 0 & 0 & 0\end{bmatrix} = \begin{bmatrix}i\epsilon_2 \\ 1 \\ 0\end{bmatrix} \begin{bmatrix}\nu & 0 & 0\end{bmatrix} + \begin{bmatrix}\mu \\ 0 \\ 0\end{bmatrix} \begin{bmatrix}i\epsilon_1 & 1 & 0\end{bmatrix}.
\]

Consider the case $c_1^{\mathrm T}c_1 = c_2^{\mathrm T}c_2 = 0$. Due to Lemma~\ref{lem:Maybank}, we can pre-multiply~$S$ by an appropriate rotation to get
\[
S = \begin{bmatrix}\alpha & 1 & 0\\ \beta & i & 0\\ \gamma & 0 & 0\end{bmatrix},
\]
where $\alpha^2 + \beta^2 + \gamma^2 = 0$. The substitution of~$S$ into~\eqref{eq:constr5_2} yields
\[
4(i\alpha - \beta)^2 = 0.
\]
It follows that $\beta = i\alpha$ and $\gamma = 0$. Therefore matrix~$S$ has rank one and
\[
S = \begin{bmatrix}\alpha & 1 & 0\\ i\alpha & i & 0\\ 0 & 0 & 0\end{bmatrix} = \begin{bmatrix}1 \\ i \\ 0\end{bmatrix} \begin{bmatrix}\alpha & 1 & 0\end{bmatrix} + 0 s^{\mathrm T},
\]
where $s$ is an arbitrary 3-vector satisfying $s^{\mathrm T} s = 0$. Thus in either case~$S$ is a trifocal essential matrix, as required.

\medskip
\noindent\textbf{Case 2: $a^{\mathrm T}a = 0$.} Due to Lemma~\ref{lem:Maybank}, we can post-multiply~$S$ by an appropriate matrix from $\mathrm{SO}(3, \mathbb C)$ and suppose without loss of generality that $a = \begin{bmatrix}0 & 1 & i\end{bmatrix}^{\mathrm T}$. Therefore $c_3 = ic_2$.

By direct computation, equality~\eqref{eq:constr5_2} becomes $(c_1^{\mathrm T}c_1)^2 = 0$, i.e. $c_1^{\mathrm T}c_1 = 0$. This yields
\[
S = \begin{bmatrix}\alpha & \lambda & i\lambda\\ \beta & \mu & i\mu\\ \gamma & \nu & i\nu\end{bmatrix} = \begin{bmatrix}\alpha \\ \beta \\ \gamma\end{bmatrix} \begin{bmatrix}1 & 0 & 0\end{bmatrix} + \begin{bmatrix}\lambda \\ \mu \\ \nu\end{bmatrix} \begin{bmatrix}0 & 1 & i\end{bmatrix},
\]
where $\alpha^2 + \beta^2 + \gamma^2 = 0$, i.e. $S$ is a trifocal essential matrix. Theorem~\ref{thm:suff1} is proved.
\end{proof}

We notice that constraints~\eqref{eq:constr5_1},~\eqref{eq:constr5_2} coincide with constraints~\eqref{eq:constr2_1},~\eqref{eq:constr2_2} from Theorem~\ref{thm:essential1}. Hence, if a trifocal essential matrix is real, then it is an essential matrix.

In general, a trifocal essential matrix does not satisfy cubic constraint~\eqref{eq:constr2_3}. The proof consists in exhibiting a counterexample. Let $s_1 = s_2 = \begin{bmatrix}1 & i & 0\end{bmatrix}^{\mathrm T}$, $t_1 = t_2 = \begin{bmatrix}1 & 0 & 0\end{bmatrix}^{\mathrm T}$. Then $S = \begin{bmatrix}2 & i & 0 \\ i & 0 & 0\\ 0 & 0 & 0\end{bmatrix}$ and the eigenvalues of $SS^{\mathrm T}$ are $0, 1, 1$. However,
\[
(\tr(SS^{\mathrm T})I - 2SS^{\mathrm T})S = -4\begin{bmatrix}1 & i & 0 \\ i & -1 & 0\\ 0 & 0 & 0\end{bmatrix} \neq 0_{3\times 3}.
\]

Nevertheless, there exists an analog of identity~\eqref{eq:constr2_3} for trifocal essential matrices.

\begin{theorem}
\label{thm:crit1}
A $3\times 3$ matrix $S$ is a trifocal essential matrix if and only if
\begin{equation}
\label{eq:constr6}
(\tr(SS^{\mathrm T}) I - 2SS^{\mathrm T})^2 S = 0_{3\times 3}.
\end{equation}
\end{theorem}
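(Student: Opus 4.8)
The plan is to mimic the structure of the proof of Theorem~\ref{thm:suff1}, but now working with the sharper-looking matrix identity~\eqref{eq:constr6}. For the ``only if'' direction, suppose $S$ is a trifocal essential matrix. By Theorem~\ref{thm:eigenval}, $SS^{\mathrm T}$ has eigenvalues $0, \kappa, \kappa$ for some $\kappa \in \mathbb C$. Since $SS^{\mathrm T}$ is symmetric with $\tr(SS^{\mathrm T}) = 2\kappa$, the matrix $M := \tr(SS^{\mathrm T}) I - 2SS^{\mathrm T}$ has eigenvalues $2\kappa, 0, 0$. I would argue that $M^2$ has eigenvalues $4\kappa^2, 0, 0$ and that $M^2$ annihilates the two-dimensional ``column space'' directions of $S$: more precisely, I want to show that every column of $S$ lies in the kernel of $M^2$. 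This is not automatic from the eigenvalue count alone because $SS^{\mathrm T}$ need not be diagonalizable when $\kappa = 0$ or when the symmetric bilinear form degenerates on eigenspaces; the honest way is to substitute $S = s_1 t_1^{\mathrm T} + t_2 s_2^{\mathrm T}$ directly. Using $s_k^{\mathrm T} s_k = 0$ one computes, as in~\eqref{eq:matrix1}, that $SS^{\mathrm T} = s_1(\mu s_1 + \nu t_2)^{\mathrm T} + \nu t_2 s_1^{\mathrm T}$ with $\mu = t_1^{\mathrm T} t_1$, $\nu = s_2^{\mathrm T} t_1$; then $M = \tr(SS^{\mathrm T})I - 2SS^{\mathrm T}$, and I would compute $M s_1$ and $M t_2$, then $M^2 s_1$ and $M^2 t_2$, and show the latter two vanish. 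Since the column space of $S$ is spanned by $s_1$ and $t_2$, this gives $M^2 S = 0_{3\times 3}$, which is~\eqref{eq:constr6}.

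For the ``if'' direction, I would again use a normal form obtained by pre- and post-multiplying $S$ by elements of $\mathrm{SO}(3,\mathbb C)$, which preserves both the trifocal-essential-matrix property and equation~\eqref{eq:constr6} (the latter because $RSQ(RSQ)^{\mathrm T} = R(SS^{\mathrm T})R^{\mathrm T}$ and traces, products, and the right factor $S$ transform covariantly). The first subtlety is that~\eqref{eq:constr6} does \emph{not} by itself force $\det S = 0$, so I cannot immediately grab a null vector of $S$ as in the previous proof. The strategy instead is: either $S$ has a null vector, in which case I follow the Case~1 / Case~2 dichotomy of the proof of Theorem~\ref{thm:suff1} (isotropic vs.\ non-isotropic null direction) and run the normal-form computation, now substituting into~\eqref{eq:constr6} instead of~\eqref{eq:constr5_2}; or $S$ is invertible, in which case I must derive a contradiction. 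If $S$ is invertible then~\eqref{eq:constr6} gives $M^2 = 0_{3\times 3}$, hence all eigenvalues of $M$ are zero, hence all three eigenvalues of $SS^{\mathrm T}$ equal $\tfrac12 \tr(SS^{\mathrm T})$; but $\det(SS^{\mathrm T}) = (\det S)^2 \ne 0$ forces this common eigenvalue to be non-zero, and then $M^2 = 0$ with $M$ having a non-trivial Jordan structure is possible only if $SS^{\mathrm T}$ fails to be diagonalizable — I would push this through the relation $M^2 = (\tr SS^{\mathrm T})^2 I - 4\tr(SS^{\mathrm T}) SS^{\mathrm T} + 4(SS^{\mathrm T})^2 = 0$, i.e. $SS^{\mathrm T}$ satisfies a quadratic, so it \emph{is} diagonalizable, so $M = 0$, so $SS^{\mathrm T} = \tfrac12\tr(SS^{\mathrm T}) I$; a symmetric matrix equal to a non-zero scalar multiple of $I$ has non-zero determinant, which is consistent — so this branch needs one more idea, e.g. showing directly that $SS^{\mathrm T} = \lambda I$ with $\lambda \ne 0$ together with $S$ of a suitable normal form cannot satisfy~\eqref{eq:constr6}, or simply observing that in that case~\eqref{eq:constr6} reads $0\cdot S = 0$ vacuously and I instead need to rule out invertible $S$ with $SS^{\mathrm T} = \lambda I$ by a separate argument. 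The cleanest fix: reduce to the case-analysis on whether $\tr(SS^{\mathrm T}) = 0$.

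Concretely I would split as follows. If $\tr(SS^{\mathrm T}) \ne 0$, then~\eqref{eq:constr6} implies (expanding the square) that $4(SS^{\mathrm T})^2 - 4\tr(SS^{\mathrm T})(SS^{\mathrm T})^2/\!\!\!\ldots$ — better: left-multiply~\eqref{eq:constr6} by $S^{\mathrm T}$ on nothing and instead take the trace of~\eqref{eq:constr6}$\cdot S^{\mathrm T}$ to extract scalar consequences, in particular recovering~\eqref{eq:constr5_2} and also $\det S = 0$, after which Theorem~\ref{thm:suff1} finishes the job. If $\tr(SS^{\mathrm T}) = 0$ then~\eqref{eq:constr6} becomes $4(SS^{\mathrm T})^2 S = 0_{3\times 3}$, and I would analyze this via the normal forms for $S$ with $\tr(SS^{\mathrm T}) = 0$, showing each solution has the form~\eqref{eq:formS}. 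The step I expect to be the genuine obstacle is precisely this bookkeeping in the degenerate trace-zero branch: when $\tr(SS^{\mathrm T}) = 0$ the matrix $SS^{\mathrm T}$ can be nilpotent of index up to $3$, the symmetric bilinear form restricted to column space is totally isotropic, and the ``generic'' arguments (diagonalization, distinct eigenvalues) all collapse — so one is forced into an explicit enumeration of normal forms of isotropic-rank configurations, analogous to but more involved than the $c_1^{\mathrm T} c_1 = c_2^{\mathrm T} c_2 = 0$ subcase in the proof of Theorem~\ref{thm:suff1}. I would handle it by reusing Lemma~\ref{lem:Maybank} to fix isotropic columns to the standard isotropic vector $\begin{bmatrix}1 & i & 0\end{bmatrix}^{\mathrm T}$, then substituting the resulting $2$- or $3$-parameter normal form into $4(SS^{\mathrm T})^2 S = 0_{3\times 3}$ and reading off that the free parameters are forced into a rank-$\le 2$ configuration decomposable as $s_1 t_1^{\mathrm T} + t_2 s_2^{\mathrm T}$.
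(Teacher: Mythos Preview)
Your ``only if'' direction via direct computation of $M^2 s_1$ and $M^2 t_2$ would work, but the ``if'' direction has a real gap and is also far more complicated than necessary.

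First, the error. In the invertible case you write that $M^2 = 0$ means $SS^{\mathrm T}$ satisfies a quadratic, ``so it is diagonalizable''. That inference is false: a matrix annihilated by $(x-c)^2$ with a repeated root is exactly the case where diagonalizability can fail. What you should do instead is take the trace: if all three eigenvalues of $SS^{\mathrm T}$ equal $\tfrac12\tr(SS^{\mathrm T})$, then their sum $\tfrac32\tr(SS^{\mathrm T})$ must equal $\tr(SS^{\mathrm T})$, forcing $\tr(SS^{\mathrm T})=0$. Then $M^2=0$ reads $4(SS^{\mathrm T})^2=0$, and taking determinants gives $(\det S)^4=0$, contradicting invertibility. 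That finishes the invertible branch in two lines --- and it is exactly what the paper does.

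Second, once $\det S=0$ is established, your plan to run a fresh normal-form enumeration (isotropic vs.\ non-isotropic null vector, plus a separate $\tr(SS^{\mathrm T})=0$ sub-analysis) is a long detour. The paper avoids all of it with a single polynomial identity valid for every $3\times 3$ matrix $M$:
\[
(\tr(MM^{\mathrm T})I - 2MM^{\mathrm T})^2 M \;=\; 4\,M^*\det M \;-\; M\bigl(\tr(MM^{\mathrm T})^2 - 2\tr((MM^{\mathrm T})^2)\bigr),
\]
where $M^*$ is the cofactor matrix. This identity shows immediately that \eqref{eq:constr6} is \emph{equivalent} to the pair $\det S=0$ and $\varphi(S)=0$, i.e.\ to conditions~\eqref{eq:constr5_1}--\eqref{eq:constr5_2}, so Theorem~\ref{thm:suff1} closes both directions at once. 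Your entire trace-zero case analysis, and the ``genuine obstacle'' you anticipate there, simply disappear.
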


\begin{proof}
Let us denote
\[
\Phi(M) = (\tr(MM^{\mathrm T}) I - 2MM^{\mathrm T})^2 M
\]
and
\[
\varphi(M) = \tr(MM^{\mathrm T})^2 - 2\tr((MM^{\mathrm T})^2).
\]
Then it is straightforward to show that for arbitrary $3\times 3$ matrix~$M$ the following identity holds:
\begin{equation}
\label{PhiM}
\Phi(M) = 4M^*\det M - M\varphi(M),
\end{equation}
where $M^*$ is meant the matrix of cofactors of~$M$.

Let matrix~$S$ be a trifocal essential matrix. By Theorem~\ref{thm:suff1}, $\det S = \varphi(S) = 0$. Then it follows from~\eqref{PhiM} that $\Phi(S) = 0_{3\times 3}$, i.e.~\eqref{eq:constr6} holds. The ``only if'' part is proved.

Conversely, let a $3\times 3$ matrix~$S$ satisfy~\eqref{eq:constr6}, i.e. $\Phi(S) = 0_{3\times 3}$. It suffices to show that $\det S = 0$. Suppose, by hypothesis, that $\det S \neq 0$. Then, post-multiplying~\eqref{eq:constr6} by~$S^{-1}$, we get
\[
(\tr(SS^{\mathrm T}) I - 2SS^{\mathrm T})^2 = 0_{3\times 3}.
\]
It follows that all the eigenvalues of~$\tr(SS^{\mathrm T}) I - 2SS^{\mathrm T}$ are zeroes and
\[
\tr(\tr(SS^{\mathrm T}) I - 2SS^{\mathrm T}) = \tr(SS^{\mathrm T}) = 0.
\]
The substitution of this into~\eqref{eq:constr6} yields $(\det S)^5 = 0$ in contradiction to the hypothesis $\det S \neq 0$. Thus, $\det S = 0$ and, by~\eqref{PhiM}, $\varphi(S) = 0$. By Theorem~\ref{thm:suff1}, matrix~$S$ is a trifocal essential matrix. Theorem~\ref{thm:crit1} is proved.
\end{proof}

To summarize, the above theorems imply the following statements.
\begin{itemize}
\item The pair of scalar constraints~\eqref{eq:constr5_1},~\eqref{eq:constr5_2} is equivalent to the single matrix constraint~\eqref{eq:constr6}.
\item The most general form of a $3\times 3$ matrix satisfying equations~\eqref{eq:constr5_1} and~\eqref{eq:constr5_2} is the trifocal essential matrix given by~\eqref{eq:formS}.
\item If a trifocal essential matrix is real, then it is an essential matrix.
\item Every essential matrix is a trifocal essential matrix, but the converse is not true in general.
\end{itemize}

\section{Three Necessary Conditions on a Calibrated Trifocal Tensor}
\label{sec:conds}

A new notion of trifocal essential matrix, introduced in the previous section, turns out to be closely related to calibrated trifocal tensors. The connection is established by the following lemma.

\begin{lemma}
\label{lem:hatS}
Let $\hat T = \begin{bmatrix}\hat T_1 & \hat T_2 & \hat T_3\end{bmatrix}$ be a calibrated trifocal tensor. Then a $3\times 3$ matrix $S_{\hat T} = \alpha \hat T_1 + \beta \hat T_2 + \gamma \hat T_3$, where numbers $\alpha, \beta, \gamma$ are such that $\alpha^2 + \beta^2 + \gamma^2 = 0$, is a trifocal essential matrix, i.e. it can be represented in form~\eqref{eq:formS}.
\end{lemma}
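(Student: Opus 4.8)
The plan is to substitute the explicit parametrization~\eqref{eq:calibT} of a calibrated trifocal tensor directly into $S_{\hat T}$ and simply read off a decomposition of the form~\eqref{eq:formS}. Set $v = \alpha e_1 + \beta e_2 + \gamma e_3$, so that by hypothesis $v^{\mathrm T} v = \alpha^2 + \beta^2 + \gamma^2 = 0$. One may assume $v \neq 0$, since otherwise $S_{\hat T} = 0_{3\times 3}$, which is of the form~\eqref{eq:formS} with $t_1 = t_2 = 0$ and $s_1 = s_2$ any non-zero isotropic vector.

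First I would compute $S_{\hat T}$ using linearity of~\eqref{eq:calibT} in $e_k$ together with $\sum_k (v)_k e_k = v$:
\[
S_{\hat T} = \sum_{k=1}^{3} (v)_k \hat T_k = R_2 v\, t_3^{\mathrm T} - t_2 v^{\mathrm T} R_3^{\mathrm T} = (R_2 v)\, t_3^{\mathrm T} + t_2 (-R_3 v)^{\mathrm T}.
\]
Then, putting $s_1 = R_2 v$, $s_2 = -R_3 v$, and $t_1 := t_3$, this is precisely $S_{\hat T} = s_1 t_1^{\mathrm T} + t_2 s_2^{\mathrm T}$, which has the shape required by~\eqref{eq:formS}.

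It remains to check the two conditions on $s_1$ and $s_2$. Since $R_2, R_3 \in \mathrm{SO}(3)$ are invertible and $v \neq 0$, we have $s_1 = R_2 v \neq 0$ and $s_2 = -R_3 v \neq 0$. Orthogonality of $R_2$ and $R_3$ then gives $s_1^{\mathrm T} s_1 = v^{\mathrm T} R_2^{\mathrm T} R_2 v = v^{\mathrm T} v = 0$ and likewise $s_2^{\mathrm T} s_2 = v^{\mathrm T} R_3^{\mathrm T} R_3 v = v^{\mathrm T} v = 0$. Hence $S_{\hat T}$ meets all the defining requirements of a trifocal essential matrix.

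There is no genuine obstacle here: the statement reduces to a one-line computation once the correct substitution $v = \begin{bmatrix}\alpha & \beta & \gamma\end{bmatrix}^{\mathrm T}$ is identified, and the only mild points to be careful about are the non-vanishing of $s_1, s_2$ (which follows from invertibility of the rotations) and the trivial degenerate case $v = 0$.
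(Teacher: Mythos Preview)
Your proof is correct and follows essentially the same approach as the paper: both substitute the parametrization~\eqref{eq:calibT}, collect the linear combination into the vector $v=\begin{bmatrix}\alpha&\beta&\gamma\end{bmatrix}^{\mathrm T}$, and read off $s_1=R_2v$, $s_2=-R_3v$ with $s_k^{\mathrm T}s_k=v^{\mathrm T}v=0$. You are in fact slightly more careful than the paper in explicitly handling the degenerate case $v=0$ and in verifying that $s_1,s_2$ are non-zero, which the definition of a trifocal essential matrix requires.
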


\begin{proof}
We notice that
\[
S_{\hat T} = \alpha \hat T_1 + \beta \hat T_2 + \gamma \hat T_3 = R_2 s t_3^{\mathrm T} - t_2 s^{\mathrm T} R_3^{\mathrm T} = s_2 t_3^{\mathrm T} + (-t_2) s_3^{\mathrm T},
\]
where $s = \begin{bmatrix}\alpha & \beta & \gamma\end{bmatrix}^{\mathrm T}$, and $s_k = R_k s$ are 3-vectors satisfying
\[
s_k^{\mathrm T} s_k = s^{\mathrm T}R_k^{\mathrm T}R_k s = s^{\mathrm T}s = 0.
\]
It follows that~$S_{\hat T}$ is a trifocal essential matrix. Lemma~\ref{lem:hatS} is proved.
\end{proof}

We call matrix $S_{\hat T}(s) = \alpha \hat T_1 + \beta \hat T_2 + \gamma \hat T_3$ the \emph{trifocal essential matrix associated with $\hat T = \begin{bmatrix}\hat T_1 & \hat T_2 & \hat T_3\end{bmatrix}$}. It has the following geometric interpretation.

A conic $\Omega_\infty \subset \mathbb P^3$ consisting of points $\begin{bmatrix}\alpha & \beta & \gamma & 0\end{bmatrix}^{\mathrm T}$ with $\alpha^2 + \beta^2 + \gamma^2 = 0$ is known as the \emph{absolute conic}~\cite{HZ}. It lies on the plane at infinity and does not have any real points.

Let the camera matrices be $P_1 = \begin{bmatrix} K_1 & 0 \end{bmatrix}$, $P_2 = K_2\begin{bmatrix} R_2 & t_2 \end{bmatrix}$, $P_3 = K_3\begin{bmatrix} R_3 & t_3 \end{bmatrix}$, where~$K_k$ is the calibration matrix of the $k$th camera. Then the $k$th image of~$\Omega_\infty$ is $\omega_k = (K_kK_k^{\mathrm T})^{-1}$. Let $s = \begin{bmatrix}\alpha & \beta & \gamma\end{bmatrix}^{\mathrm T}$, $Q = \begin{bmatrix}s & 0\end{bmatrix}^{\mathrm T} \in \Omega_\infty$ and $q_1 \sim P_1 Q = K_1 s$, $q_k \sim P_k Q = K_k R_k s$, $k = 2, 3$. Then, by~\eqref{eq:Tj},
\[
S_T(q_1) = \sum\limits_{j = 1}^3(q_1)_jT_j \sim K_2 S_{\hat T}(s) K_3^{\mathrm T}.
\]
It follows that
\begin{equation}
\label{eq:ST1}
S_T(q_1) [q_3]_{\times}p_3 \sim K_2 S_{\hat T}(s) [R_3s]_{\times} K_3^{-1} p_3 \sim K_2 R_2 s \sim q_2
\end{equation}
for arbitrary point $p_3$ in the third image. Thus, the rank deficient matrix~$S_T(q_1)$ represents a mapping $\mathbb P^1 \to \mathbb P^0$ from the pencil of lines through the point~$q_3 \in \omega_3$ in the third image to the corresponding point~$q_2 \in \omega_2$ in the second image (Fig.~\ref{fig:trifessen}).

\begin{figure}[t]
\centering
\includegraphics[width=0.6\hsize]{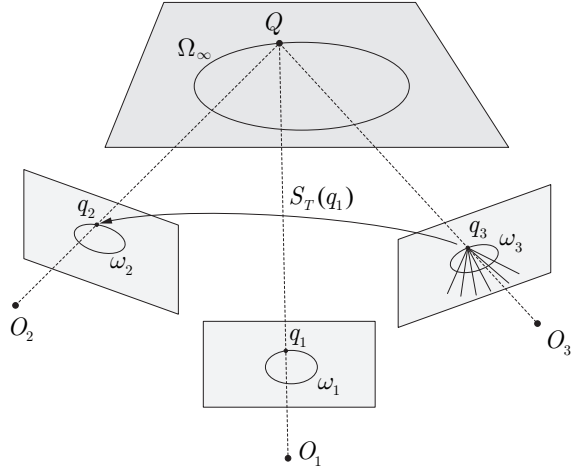}
\caption{Geometric interpretation of the trifocal essential matrix associated with a calibrated trifocal tensor}\label{fig:trifessen}
\end{figure}

\bigskip

In the rest of this section calibrated trifocal tensors are allowed to have complex entries, that is in~\eqref{eq:calibT} matrices $R_2, R_3$ belong to~$\mathrm{SO}(3, \mathbb C)$, and vectors $t_2, t_3$ are in~$\mathbb C^3$.

Let us introduce six symmetric matrices ($k = 1, 2, 3$)
\begin{equation}
\label{eq:UkVk}
\begin{split}
U_k &= \hat T_k \hat T_k^{\mathrm T},\\
V_k &= \hat T_k \hat T_{k + 1}^{\mathrm T} + \hat T_{k + 1} \hat T_k^{\mathrm T}.
\end{split}
\end{equation}
Here $k + 1$ should be read as $k \pmod 3 +1$, i.e. $V_3 = \hat T_3 \hat T_1^{\mathrm T} + \hat T_1 \hat T_3^{\mathrm T}$.

\begin{theorem}[1st necessary condition]
\label{thm:nine}
Let $\hat T = \begin{bmatrix}\hat T_1 & \hat T_2 & \hat T_3\end{bmatrix}$ be a calibrated trifocal tensor, matrices~$U_k$,~$V_k$ be defined in~\eqref{eq:UkVk}. Then the entries of~$\hat T_1$, $\hat T_2$, $\hat T_3$ are constrained by the following equations:
\begin{align}
\label{eq:quart1}
\psi(U_3 - U_1, U_3 - U_1) - \psi(V_3, V_3) &= 0,\\
\label{eq:quart2}
\psi(U_3 - U_1, V_1) + \psi(V_2, V_3) &= 0,\\
\label{eq:quart3}
\psi(U_1 - U_2, V_1) &= 0,
\end{align}
where $\psi(X, Y) = \tr(X)\tr(Y) - 2\tr(XY)$. Six more equations are obtained from~\eqref{eq:quart1} -- \eqref{eq:quart3} by a cyclic permutation of indices $1 \to 2 \to 3 \to 1$. The resulting nine equations are linearly independent.
\end{theorem}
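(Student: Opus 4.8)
\emph{Proof proposal.} The plan is to obtain all nine equations from one divisibility statement. By Lemma~\ref{lem:hatS}, for every nonzero $s=\begin{bmatrix}\alpha&\beta&\gamma\end{bmatrix}^{\mathrm T}$ with $\alpha^2+\beta^2+\gamma^2=0$ the matrix $S_{\hat T}(s)=\alpha\hat T_1+\beta\hat T_2+\gamma\hat T_3$ is a trifocal essential matrix, hence by Theorem~\ref{thm:eigenval} (equivalently, the ``only if'' part of Theorem~\ref{thm:suff1}) it satisfies $\varphi(S_{\hat T}(s))=0$, where
\[
\varphi(M)=\tr(MM^{\mathrm T})^2-2\tr\bigl((MM^{\mathrm T})^2\bigr)=\psi(MM^{\mathrm T},MM^{\mathrm T}).
\]
I would view $P(\alpha,\beta,\gamma):=\varphi(\alpha\hat T_1+\beta\hat T_2+\gamma\hat T_3)$ as a homogeneous quartic form in $\alpha,\beta,\gamma$ whose fifteen coefficients are quartic polynomials in the entries of $\hat T_1,\hat T_2,\hat T_3$. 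The identity $\varphi(S_{\hat T}(s))=0$ says $P$ vanishes on the conic $\alpha^2+\beta^2+\gamma^2=0$; since $\alpha^2+\beta^2+\gamma^2$ is irreducible, the Nullstellensatz gives $P=(\alpha^2+\beta^2+\gamma^2)\,Q$ for a quadratic form $Q$. Because multiplication by $\alpha^2+\beta^2+\gamma^2$ embeds the $6$-dimensional space of quadratic forms into the $15$-dimensional space of quartic forms, this divisibility is equivalent to exactly $15-6=9$ linear relations among the coefficients of $P$, and those relations are the sought constraints on $\hat T$.

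To write the nine relations in the stated form I would substitute $SS^{\mathrm T}=\alpha^2U_1+\beta^2U_2+\gamma^2U_3+\alpha\beta V_1+\beta\gamma V_2+\gamma\alpha V_3$ into $P=\psi(SS^{\mathrm T},SS^{\mathrm T})$ and expand using bilinearity and symmetry of $\psi$, so that each coefficient of $P$ becomes an integer combination of the scalars $\psi(U_i,U_j)$, $\psi(U_i,V_j)$, $\psi(V_i,V_j)$. The relations can then be read off either by reducing $P$ modulo $\alpha^2+\beta^2+\gamma^2$ (replace $\gamma^2\mapsto-(\alpha^2+\beta^2)$ and demand that the result, now of degree $\le1$ in $\gamma$, vanish identically), or, more symmetrically, by plugging the rational parametrization $(\alpha,\beta,\gamma)=(1-u^2,\ i(1+u^2),\ 2u)$ of the conic into $P$ and setting the nine coefficients of $u^0,\dots,u^8$ to zero. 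A direct check then identifies these nine relations with~\eqref{eq:quart1}--\eqref{eq:quart3} and their two cyclic images under $1\to2\to3\to1$; for instance evaluating $\varphi(S_{\hat T}(s))=0$ at $s=\begin{bmatrix}1&0&\pm i\end{bmatrix}^{\mathrm T}$, where $S_{\hat T}(s)S_{\hat T}(s)^{\mathrm T}=U_1-U_3\pm iV_3$, already yields $\psi(U_3-U_1,U_3-U_1)-\psi(V_3,V_3)=0$ and $\psi(U_3-U_1,V_3)=0$, and the cyclic and ``mixed'' coefficients produce the remaining equations after a reorganization that uses only $(U_1-U_2)+(U_2-U_3)+(U_3-U_1)=0$.

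For the independence claim it suffices, by the above, to check that no nontrivial linear combination of the nine quartics vanishes identically in the entries of $\hat T$ --- equivalently that, as $\hat T$ ranges over all $(2,1)$ tensors, the $15$-vectors of coefficients of $P$ together with the $6$-dimensional image of multiplication by $\alpha^2+\beta^2+\gamma^2$ span all of $\mathbb C^{15}$. This is a finite computation: one exhibits nine tensors at which the $9\times9$ matrix of values of the nine quartics is nonsingular.

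The step I expect to be the main obstacle is the bookkeeping in the second paragraph: reorganizing the machine-produced coefficient relations into the compact, manifestly cyclic form $\psi(U_3-U_1,U_3-U_1)-\psi(V_3,V_3)$, $\psi(U_3-U_1,V_1)+\psi(V_2,V_3)$, $\psi(U_1-U_2,V_1)$. This is routine but lengthy; the Nullstellensatz step and the independence verification are comparatively short.
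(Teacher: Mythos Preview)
Your approach is correct and shares the same core idea with the paper: both start from $\varphi(S_{\hat T}(s))=0$ for all $s$ on the conic $\alpha^2+\beta^2+\gamma^2=0$, write $S_{\hat T}S_{\hat T}^{\mathrm T}$ in terms of $U_k,V_k$, and extract the nine coefficient conditions. The differences are in packaging. You invoke the Nullstellensatz and a dimension count ($15-6=9$) to know \emph{a priori} that exactly nine independent linear conditions on the coefficients of $P$ arise; the paper skips this and simply substitutes $\gamma^2=-\alpha^2-\beta^2$, reads off the coefficients of $\alpha^4$, $\alpha^3\beta$, $\alpha\beta^3$ (your ``reduce modulo'' option), and then obtains the remaining six by cyclic permutation of the indices --- which is legitimate because the whole setup is invariant under simultaneous cyclic permutation of $(\alpha,\beta,\gamma)$ and $(\hat T_1,\hat T_2,\hat T_3)$. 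Your framing explains cleanly why the count is nine; the paper's route gets to~\eqref{eq:quart1}--\eqref{eq:quart3} with less bookkeeping, since only three coefficients need to be worked out explicitly before symmetry does the rest.

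For linear independence the paper's argument is considerably lighter than yours: rather than evaluating at nine test tensors and checking a $9\times 9$ determinant, it exhibits for each of the nine polynomials a single monomial (e.g.\ $(\hat T_3)_{11}^2(\hat T_1)_{11}^2$ for~\eqref{eq:quart1}) that appears in that polynomial and in none of the other eight. This immediately rules out any nontrivial linear relation and avoids the span argument you sketch. You may want to adopt that shortcut; it also sidesteps the need to argue that the image of $\hat T\mapsto P$ together with the multiples of $\alpha^2+\beta^2+\gamma^2$ spans all quartic forms.
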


\begin{proof}
Let $S_{\hat T} = \alpha \hat T_1 + \beta \hat T_2 + \gamma \hat T_3$ be a trifocal essential matrix associated with~$\hat T$. By Theorem~\ref{thm:suff1}, the following equation holds:
\begin{equation}
\label{eq:constr3_3}
\tr(S_{\hat T}S_{\hat T}^{\mathrm T})^2 - 2\tr((S_{\hat T}S_{\hat T}^{\mathrm T})^2) = 0.
\end{equation}

The definition of matrices~$U_k$,~$V_k$ (see~\eqref{eq:UkVk}) permits us to write
\[
S_{\hat T}S_{\hat T}^{\mathrm T} = \alpha^2 U_1 + \beta^2 U_2 + \gamma^2 U_3 + \alpha\beta V_1 + \beta\gamma V_2 + \gamma\alpha V_3.
\]
Substituting this into~\eqref{eq:constr3_3}, we find the coefficients in~$\alpha^4$, $\alpha^3\beta$ and $\alpha\beta^3$ taking into account that $\gamma^2 = -\alpha^2 - \beta^2$. Because $\alpha$ and~$\beta$ are arbitrary, these coefficients must vanish:
\begin{align}
\label{eq:coeff1}
\alpha^4 &\colon \psi(U_3 - U_1, U_3 - U_1) - \psi(V_3, V_3) = 0,\\
\label{eq:coeff2}
\alpha^3\beta &\colon \psi(U_1 - U_3, V_1) - \psi(V_2, V_3) = 0,\\
\label{eq:coeff3}
\alpha\beta^3 &\colon \psi(U_2 - U_3, V_1) - \psi(V_2, V_3) = 0.
\end{align}
Thus we get~$\eqref{eq:quart1} = \eqref{eq:coeff1}$,~$\eqref{eq:quart2} = -\eqref{eq:coeff2}$, and $\eqref{eq:quart3} = \eqref{eq:coeff2} - \eqref{eq:coeff3}$. It is clear that we can get six more constraints on~$\hat T_k$ from~\eqref{eq:quart1}~--~\eqref{eq:quart3} by a cyclic permutation of the indices.

Finally, the resulting nine polynomials can not be linearly dependent, since each of them contains monomials that are not contained in all the other polynomials. Examples of such monomials for~\eqref{eq:quart1}, \eqref{eq:quart2} and~\eqref{eq:quart3} are $(\hat T_3)^2_{11}(\hat T_1)^2_{11}$, $(\hat T_3)^2_{11}(\hat T_1)_{11}(\hat T_2)_{11}$ and~$(\hat T_1)^3_{11}(\hat T_2)_{11}$ respectively. Theorem~\ref{thm:nine} is proved.
\end{proof}

From now on the nine equalities from Theorem~\ref{thm:nine} will be referred to as the \emph{eigenvalue constraints}.

\begin{theorem}
\label{thm:nine2}
Let $T = \begin{bmatrix}T_1 & T_2 & T_3\end{bmatrix}$ be a $(2, 1)$ tensor satisfying the ten extended rank and nine eigenvalue constrains. Then matrix $S_T = \alpha T_1 + \beta T_2 + \gamma T_3$ with $\alpha^2 + \beta^2 + \gamma^2 = 0$ is a trifocal essential matrix.
\end{theorem}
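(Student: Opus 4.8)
The plan is to combine Theorem~\ref{thm:suff1} with the structure of the extended rank and eigenvalue constraints to show directly that $S_T = \alpha T_1 + \beta T_2 + \gamma T_3$ satisfies the two characterization equations~\eqref{eq:constr5_1} and~\eqref{eq:constr5_2} whenever $\alpha^2 + \beta^2 + \gamma^2 = 0$. The first equation, $\det S_T = 0$, is immediate: it is exactly the extended rank constraint~\eqref{eq:rank}, which holds for all scalars $\alpha,\beta,\gamma$ (in particular on the cone $\alpha^2+\beta^2+\gamma^2=0$). So the entire burden is to verify the scalar quartic equation $\varphi(S_T) := \tr(S_T S_T^{\mathrm T})^2 - 2\tr((S_T S_T^{\mathrm T})^2) = 0$ on that cone, after which Theorem~\ref{thm:suff1} finishes the job.

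For the quartic, I would expand $S_T S_T^{\mathrm T} = \alpha^2 U_1 + \beta^2 U_2 + \gamma^2 U_3 + \alpha\beta V_1 + \beta\gamma V_2 + \gamma\alpha V_3$ exactly as in the proof of Theorem~\ref{thm:nine}, substitute into $\varphi(S_T)$, and use $\gamma^2 = -\alpha^2 - \beta^2$ to reduce everything to a homogeneous quartic in $\alpha,\beta$ alone. This quartic has five coefficients --- those of $\alpha^4$, $\alpha^3\beta$, $\alpha^2\beta^2$, $\alpha\beta^3$, $\beta^4$ --- and $\varphi(S_T)$ vanishes on the cone precisely when all five vanish. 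The point of the argument is that the nine eigenvalue constraints (the three displayed equations~\eqref{eq:quart1}--\eqref{eq:quart3} together with their two cyclic images) are set up to be exactly equivalent to the vanishing of these five coefficients: the proof of Theorem~\ref{thm:nine} already identifies the $\alpha^4$, $\alpha^3\beta$, $\alpha\beta^3$ coefficients with~\eqref{eq:coeff1}--\eqref{eq:coeff3}, and by symmetry (swapping the roles of $\alpha$ and $\beta$, which amounts to a relabeling within the cyclic family) the $\beta^4$ and $\alpha\beta^3$-type coefficients are the cyclically permuted equations; the remaining $\alpha^2\beta^2$ coefficient should turn out to be a linear combination of the nine. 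Thus I would check that each of the five coefficient polynomials lies in the linear span of the nine eigenvalue constraints, so that all nine vanishing forces all five to vanish, hence $\varphi(S_T)=0$ on the cone.

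Combining the two parts: $S_T$ on the cone $\alpha^2+\beta^2+\gamma^2=0$ satisfies $\det S_T = 0$ by the extended rank constraint and $\varphi(S_T) = 0$ by the eigenvalue constraints, so Theorem~\ref{thm:suff1} applies and $S_T$ is a trifocal essential matrix, as claimed.

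The main obstacle I anticipate is the bookkeeping in the middle step: verifying that the $\alpha^2\beta^2$ coefficient --- and the full set of five coefficients after imposing $\gamma^2=-\alpha^2-\beta^2$ --- is genuinely spanned by the nine eigenvalue constraints rather than imposing some extra independent condition. This requires carefully tracking how the substitution $\gamma^2 = -\alpha^2-\beta^2$ mixes the $U_k$ and $V_k$ contributions of different bidegrees, and then exhibiting the explicit linear dependence. Everything else is either a restatement of an already-proved constraint (the determinant) or a direct appeal to Theorem~\ref{thm:suff1}.
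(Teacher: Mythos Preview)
Your overall strategy matches the paper's: use the extended rank constraints to get $\det S_T=0$, then show $\varphi(S_T)=0$ on the cone by expanding $S_TS_T^{\mathrm T}$ in the $U_k,V_k$ and checking that every monomial coefficient lies in the span of the nine eigenvalue constraints, and finally invoke Theorem~\ref{thm:suff1}. That is exactly what the paper does.

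There is, however, a genuine error in your reduction step. You claim that substituting $\gamma^2=-\alpha^2-\beta^2$ turns $\varphi(S_T)$ into a homogeneous quartic in $\alpha,\beta$ \emph{alone}, with five coefficients. This is false: the substitution eliminates only even powers of~$\gamma$, so a single factor of~$\gamma$ survives in the odd-degree terms. After reduction, $\varphi(S_T)$ is a combination of the nine monomials
\[
\alpha^4,\ \alpha^3\beta,\ \alpha^3\gamma,\ \alpha^2\beta^2,\ \alpha^2\beta\gamma,\ \alpha\beta^3,\ \alpha\beta^2\gamma,\ \beta^4,\ \beta^3\gamma,
\]
not five. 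Since $\alpha,\beta$ are free and $\gamma$ is only constrained up to sign, vanishing of $\varphi(S_T)$ on the cone requires all nine coefficients to vanish. This is precisely why there are nine eigenvalue constraints, and your accounting (three from Theorem~\ref{thm:nine}, plus ``symmetry'' for $\beta^4$ and $\alpha\beta^3$, plus one leftover $\alpha^2\beta^2$) leaves four coefficients --- those carrying a factor of~$\gamma$ --- unaddressed. The cyclic permutation $1\to 2\to 3\to 1$ acts as $\alpha\to\beta\to\gamma\to\alpha$, which is exactly what generates the $\gamma$-bearing coefficients; it does not simply swap $\alpha$ and~$\beta$. Once you correct the monomial count to nine and trace the cyclic action properly, the verification that all nine coefficients are spanned by the nine eigenvalue polynomials goes through, and the rest of your argument is fine.
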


\begin{proof}
The extended rank constraints imply $\det S_T = 0$. Taking into account that $\gamma^2 = -\alpha^2 - \beta^2$, we conclude that the expression
\[
\varphi(S_T) = \tr(S_TS_T^{\mathrm T})^2 - 2\tr((S_TS_T^{\mathrm T})^2)
\]
contains 9 monomials:
\[
\alpha^4, \alpha^3\beta, \alpha^3\gamma, \alpha^2\beta^2, \alpha^2\beta\gamma, \alpha\beta^3, \alpha\beta^2\gamma, \beta^4, \beta^3\gamma.
\]
It is directly verified that the coefficients in all of them are linear combinations of the nine polynomials from Theorem~\ref{thm:nine}, i.e. $\varphi(S_T) = 0$. By Theorem~\ref{thm:suff1}, $S_T$ is a trifocal essential matrix, as required.
\end{proof}

\begin{theorem}[2nd necessary condition]
\label{thm:99}
Let $\hat T = \begin{bmatrix}\hat T_1 & \hat T_2 & \hat T_3\end{bmatrix}$ be a calibrated trifocal tensor. Then the entries of~$\hat T_1$, $\hat T_2$, $\hat T_3$ are constrained by the 99 linearly independent quintic (of degree~5) polynomial equations.
\end{theorem}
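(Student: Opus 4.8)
The plan is to subject the matrix criterion of Theorem~\ref{thm:crit1} to the same treatment that Theorem~\ref{thm:nine} applies to the scalar criterion of Theorem~\ref{thm:suff1}. Let $S_{\hat T}(s) = \alpha\hat T_1 + \beta\hat T_2 + \gamma\hat T_3$ be the trifocal essential matrix associated with~$\hat T$, where $s = \begin{bmatrix}\alpha & \beta & \gamma\end{bmatrix}^{\mathrm T}$ and $\alpha^2 + \beta^2 + \gamma^2 = 0$. By Lemma~\ref{lem:hatS} it is a trifocal essential matrix, so Theorem~\ref{thm:crit1} together with~\eqref{eq:constr6} gives
\[
\bigl(\tr(S_{\hat T}S_{\hat T}^{\mathrm T})\,I - 2S_{\hat T}S_{\hat T}^{\mathrm T}\bigr)^2 S_{\hat T} = 0_{3\times 3}
\]
identically in~$s$ subject to $\alpha^2 + \beta^2 + \gamma^2 = 0$. (Equivalently, this already follows from identity~\eqref{PhiM} together with~\eqref{eq:rank}, which yields $\det S_{\hat T}(s) = 0$, and Theorem~\ref{thm:nine}, which yields $\tr(S_{\hat T}S_{\hat T}^{\mathrm T})^2 - 2\tr((S_{\hat T}S_{\hat T}^{\mathrm T})^2) = 0$.)

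Next one would expand the left-hand side. Writing $S_{\hat T}S_{\hat T}^{\mathrm T} = \alpha^2 U_1 + \beta^2 U_2 + \gamma^2 U_3 + \alpha\beta V_1 + \beta\gamma V_2 + \gamma\alpha V_3$ with $U_k,V_k$ as in~\eqref{eq:UkVk}, the left-hand side is a homogeneous polynomial of degree~$5$ in $\alpha,\beta,\gamma$ whose coefficients are $3\times 3$ matrices with entries that are homogeneous of degree~$5$ in the entries of $\hat T_1,\hat T_2,\hat T_3$. Eliminating $\gamma^2$ via $\gamma^2 = -\alpha^2 - \beta^2$ leaves exactly the $11$ monomials
\[
\alpha^5,\ \alpha^4\beta,\ \alpha^3\beta^2,\ \alpha^2\beta^3,\ \alpha\beta^4,\ \beta^5,\ \alpha^4\gamma,\ \alpha^3\beta\gamma,\ \alpha^2\beta^2\gamma,\ \alpha\beta^3\gamma,\ \beta^4\gamma,
\]
which is precisely the number of coefficients of a binary form of degree~$10$, i.e.\ of the restriction of a ternary quintic to the conic $\alpha^2+\beta^2+\gamma^2 = 0$; hence no further reduction is possible. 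Since $\alpha$ and $\beta$ are otherwise arbitrary and the expression is linear in~$\gamma$, each of the $9\cdot 11 = 99$ scalar coefficients must vanish, and these are the asserted $99$ quintic equations in the entries of $\hat T_k$.

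The remaining---and main---task is to show that these $99$ quintics are linearly independent, in particular that none of them vanishes identically. Following the pattern of the proof of Theorem~\ref{thm:nine}, the plan is to exhibit, for each of the $99$ polynomials, a monomial in the $27$ entries of $\hat T_1,\hat T_2,\hat T_3$ that occurs in it and in none of the other $98$; since all $99$ lie in one finite-dimensional space of quintics, this forces their linear independence. In practice one looks for such distinguishing monomials among those supported on a small set of entries (for instance the diagonal entries $(\hat T_k)_{ii}$), exactly as in the quartic case; as an alternative one may specialize $\hat T$ to several random numerical tensors, assemble the $99$ coefficient vectors in the monomial basis of quintics, and verify that their span has dimension~$99$. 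I expect this verification to be the only real obstacle: it is conceptually routine but combinatorially heavy, and is most reliably carried out with computer algebra. Finally, since the $99$ equations have degree~$5$ whereas the eigenvalue constraints of Theorem~\ref{thm:nine} have degree~$4$ and the extended rank constraints~\eqref{eq:rank} have degree~$3$, none of the $99$ can be a nonzero linear combination of the previously obtained constraints, so they are genuinely new linear conditions on a calibrated trifocal tensor.
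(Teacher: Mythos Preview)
Your approach is essentially the same as the paper's: apply the matrix criterion of Theorem~\ref{thm:crit1} to $S_{\hat T}=\alpha\hat T_1+\beta\hat T_2+\gamma\hat T_3$, reduce modulo $\gamma^2=-\alpha^2-\beta^2$, and observe that each of the nine matrix entries produces eleven monomials in $\alpha,\beta,\gamma$, giving $99$ quintic coefficients that must vanish. The paper's proof is in fact briefer than yours---it simply asserts the count and ends---while the linear independence claim is deferred to the subsequent Remark, where the explicit form~\eqref{eq:quint1}--\eqref{eq:quint4} is written out and the independence is said to be ``verified directly''; your plan to check this by distinguishing monomials or random specialization is exactly what such a direct verification amounts to.
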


\begin{proof}
Let $S_{\hat T} = \alpha \hat T_1 + \beta \hat T_2 + \gamma \hat T_3$ be a trifocal essential matrix associated with~$\hat T$. By Theorem~\ref{thm:crit1}, the following equation holds:
\begin{equation}
\label{eq:constr3_4}
(\tr(S_{\hat T}S_{\hat T}^{\mathrm T}) I - 2S_{\hat T}S_{\hat T}^{\mathrm T})^2 S_{\hat T} = 0_{3\times 3}.
\end{equation}
We notice that equality~\eqref{eq:constr3_4} is quintic in the entries of matrix~$S_{\hat T}$. Taking into account that $\gamma^2 = -\alpha^2 - \beta^2$, every of the 9 entries in the l.h.s. of~\eqref{eq:constr3_4} contains 11 monomials in variables~$\alpha$, $\beta$ and~$\gamma$. The coefficient in each of these monomials must vanish. Hence there are in total 99 quintic polynomial constraints on the entries of~$\hat T$. Theorem~\ref{thm:99} is proved.
\end{proof}

\begin{remark}
An explicit form of the quintic polynomial equations from Theorem~\ref{thm:99} is as follows:
\begin{align}
\label{eq:quint1}
&(\Psi_1(U_{13}) - \Psi_1(V_3))\hat T_1 - \Psi_2(U_{13}, V_3)\hat T_3 = 0_{3\times 3},\\
\label{eq:quint2}
&\Psi_2(U_{13}, V_3)\hat T_1 + (\Psi_1(U_{13}) - \Psi_1(V_3))\hat T_3 = 0_{3\times 3},\\
\label{eq:quint3}
&(\Psi_2(U_{13}, V_2) + \Psi_2(V_1, V_3))\hat T_1 + \Psi_2(U_{13}, V_3)\hat T_2\notag \\
&\hspace{7\baselineskip}+ (\Psi_2(U_{13}, V_1) - \Psi_2(V_2, V_3))\hat T_3 = 0_{3\times 3},\\
\label{eq:quint4}
&(\Psi_2(U_{13}, V_1) - \Psi_2(V_2, V_3))\hat T_1 + (\Psi_1(U_{13}) - \Psi_1(V_3))\hat T_2\notag \\
&\hspace{7\baselineskip}- (\Psi_2(U_{13}, V_2) + \Psi_2(V_1, V_3))\hat T_3 = 0_{3\times 3},
\end{align}
where matrices~$U_k$,~$V_k$ are defined in~\eqref{eq:UkVk}, $U_{jk} = U_j - U_k$, and
\begin{align*}
\Psi(X, Y) &= (\tr(X)I - 2X)(\tr(Y)I - 2Y),\\
\Psi_1(X) &= \Psi(X, X),\\
\Psi_2(X, Y) &= \Psi(X, Y) + \Psi(Y, X).
\end{align*}
Equations~\eqref{eq:quint1}~--~\eqref{eq:quint4} give $4\times 9 = 36$ constraints on~$\hat T_k$. We get $8\times 9 = 72$ more constraints from~\eqref{eq:quint1}~--~\eqref{eq:quint4} by a cyclic permutation of indices $1 \to 2 \to 3 \to 1$. Thus, in total, we have 108 quintic constraints. Let~$M_k$ denote the l.h.s. of the $k$th version of equality~\eqref{eq:quint3}. Then we have
\[
M_1 + M_2 + M_3 \equiv 0_{3\times 3}.
\]
It follows that~\eqref{eq:quint1}~--~\eqref{eq:quint4} give only 99 constraints. Their linear independence is verified directly.
\end{remark}

\begin{remark}
We notice that the 99 quintic constraints from Theorem~\ref{thm:99} are algebraically dependent with the ten extended rank and nine eigenvalue constrains. An explicit form of that dependence is induced by formula~\eqref{PhiM}.
\end{remark}

Finally, we propose the third necessary condition on a calibrated trifocal tensor. It seems not to be directly related to the matrix~$S_{\hat T}$. However this condition could be useful in applications, since it consists of another set of quartic polynomial equations that are satisfied by a calibrated trifocal tensor.

\begin{theorem}[3rd necessary condition]
\label{thm:six}
Let $\hat T = \begin{bmatrix}\hat T_1 & \hat T_2 & \hat T_3\end{bmatrix}$ be a calibrated trifocal tensor. Then the entries of~$\hat T_1$, $\hat T_2$, $\hat T_3$ satisfy the following equations:
\begin{align}
\label{eq:quart4}
\tr(U_2)^2 - \tr(V_3)^2 - \tr(U_2^2 - V_3^2 + (U_3 - U_1)^2) &= 0,\\
\label{eq:quart5}
\tr(V_2)\tr(U_1 - 2U_2 - U_3) - \tr(V_1)\tr(V_3) + 2\tr(V_2 U_2) &= 0,
\end{align}
where matrices~$U_k$,~$V_k$ are defined in~\eqref{eq:UkVk}. Four more equations are obtained from~\eqref{eq:quart4}~--~\eqref{eq:quart5} by a cyclic permutation of indices $1 \to 2 \to 3 \to 1$. The resulting six equations are linearly independent.
\end{theorem}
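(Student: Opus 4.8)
The plan is to reduce the six identities to a routine polynomial check by first normalising the tensor. Observe that under any substitution $\hat T_k \mapsto R\,\hat T_k\,Q$ with $R, Q \in \mathrm{SO}(3, \mathbb C)$ one has $U_k \mapsto R\,U_k\,R^{\mathrm T}$ and $V_k \mapsto R\,V_k\,R^{\mathrm T}$ (using $QQ^{\mathrm T} = I$), so that every trace of a word in the $U_k$ and $V_k$, and every product of such traces, is left unchanged; in particular the left-hand sides of~\eqref{eq:quart4} and~\eqref{eq:quart5} are invariant. Applying this to a calibrated tensor with $R = R_2^{\mathrm T}$ and $Q = R_3$ and using~\eqref{eq:calibT} gives
\[
R_2^{\mathrm T}\hat T_k R_3 = e_k u^{\mathrm T} - v e_k^{\mathrm T}, \qquad u = R_3^{\mathrm T}t_3, \quad v = R_2^{\mathrm T}t_2,
\]
where $u$ and $v$ are arbitrary vectors of $\mathbb C^3$. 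Hence it suffices to prove~\eqref{eq:quart4} and~\eqref{eq:quart5} for tensors of the form $\hat T_k = e_k u^{\mathrm T} - v e_k^{\mathrm T}$, for which a direct expansion gives, writing $\rho = u^{\mathrm T}u$, $\tau = v^{\mathrm T}v$ and $u_k = (u)_k$, $v_k = (v)_k$,
\[
U_k = \rho\, e_k e_k^{\mathrm T} - u_k\bigl(e_k v^{\mathrm T} + v e_k^{\mathrm T}\bigr) + v v^{\mathrm T},
\]
\[
V_k = \rho\bigl(e_k e_{k+1}^{\mathrm T} + e_{k+1} e_k^{\mathrm T}\bigr) - u_{k+1}\bigl(e_k v^{\mathrm T} + v e_k^{\mathrm T}\bigr) - u_k\bigl(e_{k+1} v^{\mathrm T} + v e_{k+1}^{\mathrm T}\bigr),
\]
so that $\tr U_k = \rho + \tau - 2 u_k v_k$ and $\tr V_k = -2(u_k v_{k+1} + u_{k+1} v_k)$.

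For~\eqref{eq:quart4} I would first separate off the part already contained in Theorem~\ref{thm:nine}. A short manipulation shows that the left-hand side of~\eqref{eq:quart4} is equal, as a polynomial in the entries of the $\hat T_k$, to $\tfrac12\bigl(\psi(U_3 - U_1, U_3 - U_1) - \psi(V_3, V_3)\bigr) + \tfrac12\,g$, with $\psi$ as in Theorem~\ref{thm:nine} and $g = 2(\tr U_2)^2 - 2\tr(U_2^2) - (\tr V_3)^2 - (\tr U_3 - \tr U_1)^2$. The first summand is the left-hand side of~\eqref{eq:quart1} and vanishes by Theorem~\ref{thm:nine}; equivalently, $\hat T_1 + i\hat T_3$ is a trifocal essential matrix by Lemma~\ref{lem:hatS}, and~\eqref{eq:quart1} is the real part of~\eqref{eq:constr5_2} applied to it. For $g$ I use that $U_2$ has rank at most two, so $(\tr U_2)^2 - \tr(U_2^2)$ is twice the product of the two possibly non-zero eigenvalues of $U_2$; by Lemma~\ref{lem:eigenval} that product equals $(\tau - v_2^2)(\rho - u_2^2)$, which is $(v_1^2 + v_3^2)(u_1^2 + u_3^2)$ since $\tau = v_1^2 + v_2^2 + v_3^2$ and $\rho = u_1^2 + u_2^2 + u_3^2$. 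Combining this with $\tr V_3 = -2(u_1 v_3 + u_3 v_1)$ and $\tr U_3 - \tr U_1 = 2(u_1 v_1 - u_3 v_3)$ reduces $g$ to $4(v_1^2 + v_3^2)(u_1^2 + u_3^2) - 4(u_1 v_3 + u_3 v_1)^2 - 4(u_1 v_1 - u_3 v_3)^2$, which is identically zero by the two-square identity $(x^2 + y^2)(z^2 + w^2) = (x z + y w)^2 + (x w - y z)^2$. This settles~\eqref{eq:quart4}.

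For~\eqref{eq:quart5} I would substitute the closed forms of $U_1, U_2, U_3$ and $V_1, V_2, V_3$ directly. Every term is immediate except $\tr(V_2 U_2)$; multiplying the two matrices out and using $e_i^{\mathrm T} e_j = \delta_{ij}$ collapses it to a cubic expression in $u$, $v$, $\rho$, $\tau$, after which assembling~\eqref{eq:quart5} and replacing $\rho$, $\tau$ by $u_1^2 + u_2^2 + u_3^2$ and $v_1^2 + v_2^2 + v_3^2$ makes every monomial cancel. The four remaining equations are the images of~\eqref{eq:quart4} and~\eqref{eq:quart5} under the relabeling $1 \to 2 \to 3 \to 1$; evaluated at a given $\hat T$, each of them is~\eqref{eq:quart4} or~\eqref{eq:quart5} evaluated at $\begin{bmatrix}\hat T_2 & \hat T_3 & \hat T_1\end{bmatrix}$, which is again a calibrated trifocal tensor because the corresponding cyclic permutation matrix belongs to $\mathrm{SO}(3)$, so these equations hold as well. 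The linear independence of the six equations is then verified directly, by exhibiting for each of them a monomial in the entries of $\hat T_1, \hat T_2, \hat T_3$ that does not appear in the other five, exactly as in the proof of Theorem~\ref{thm:nine}. The one genuinely laborious step of this plan is the expansion of $\tr(V_2 U_2)$ and the ensuing cancellation in~\eqref{eq:quart5}; everything else is short bookkeeping once the normal form $\hat T_k = e_k u^{\mathrm T} - v e_k^{\mathrm T}$ is available.
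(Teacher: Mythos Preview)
Your approach is essentially the paper's: both reduce to the normal form $\hat T_k = e_k u^{\mathrm T} - v e_k^{\mathrm T}$ by noting that the $U_k,V_k$ are conjugated (hence the trace expressions are invariant) under $\hat T_k \mapsto R_2^{\mathrm T}\hat T_k R_3$, and then verify the identities by direct expansion, with linear independence witnessed by distinguishing monomials. Your write-up is more explicit than the paper's ``straightforward computation'' --- in particular, your decomposition of~\eqref{eq:quart4} into the eigenvalue constraint~\eqref{eq:quart1} plus the remainder $g$, handled via Lemma~\ref{lem:eigenval} and the two-square identity, is a clean way to organise that check --- but the underlying strategy is the same.
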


\begin{proof}
Let tensor $\hat T$ be represented in form~\eqref{eq:calibT}. First we replace~$\hat T$ with $\hat T' = \begin{bmatrix}R_2^{\mathrm T}\hat T_1R_3 & R_2^{\mathrm T}\hat T_2R_3 & R_2^{\mathrm T}\hat T_3R_3\end{bmatrix}$. Then the correlation slices of~$\hat T'$ are simplified to $\hat T'_k = e_kt_3^{\mathrm T} - t_2e_k^{\mathrm T}$. A straightforward computation proves that~$\hat T'$ satisfies equations~\eqref{eq:quart4}~--~\eqref{eq:quart5} and the four their consequences. Then so does~$\hat T$, since the matrices~$U_k$,~$V_k$ are the same for~$\hat T$ and~$\hat T'$.

The resulting six polynomials can not be linearly dependent, since each of them contains monomials that are not contained in all the other polynomials. Examples of such monomials for~\eqref{eq:quart4} and~\eqref{eq:quart5} are $(\hat T_3)^2_{11}(\hat T_1)^2_{11}$ and~$(\hat T_2)_{11}(\hat T_3)^3_{11}$ respectively. Theorem~\ref{thm:six} is proved.
\end{proof}

The 15 equalities from Theorems~\ref{thm:nine} and~\ref{thm:six} will be further referred to as the \emph{quartic constraints}.

\begin{remark}
The eigenvalue constraints do not imply the six equalities from Theorem~\ref{thm:six}. The following trifocal tensor gives a counterexample:
\[
T = \begin{bmatrix}\begin{bmatrix}0 & 0 & 0\\0 & 0 & 1\\ 0 & -1 & 0\end{bmatrix} & \begin{bmatrix}0 & 0 & 1\\ 0 & 0 & 0\\ -1 & 0 & 0\end{bmatrix} & \begin{bmatrix}0 & 0 & 1\\0 & 0 & 1\\ -1 & -1 & 0\end{bmatrix}\end{bmatrix}.
\]
One verifies that~$T$ satisfies the eigenvalue constraints, but not the six constraints from Theorem~\ref{thm:six}.
\end{remark}

\begin{remark}
The quartic constraints are insufficient for a trifocal tensor~$T$ to be calibrated. Here is a counterexample. Consider a $(2, 1)$ tensor
\begin{equation}
\label{eq:cntr}
T = \begin{bmatrix}\begin{bmatrix}i & 0 & 0\\0 & i & 0\\ 0 & 0 & 0\end{bmatrix} & \begin{bmatrix}0 & 0 & i\\-i & -1 & 1\\ 0 & 0 & 0\end{bmatrix} & \begin{bmatrix}1 & 0 & 0\\ -i & 0 & 0\\ i & 1 & 0\end{bmatrix}\end{bmatrix}.
\end{equation}
It is a trifocal tensor, as
\[
T_k = \begin{bmatrix}1 & 0 & 0\\0 & -1 & 0\\ 0 & 0 & 1\end{bmatrix} e_k \begin{bmatrix}i & 1 & 0\end{bmatrix} - \begin{bmatrix}i \\ 1 \\ 0\end{bmatrix} e_k^{\mathrm T} \begin{bmatrix}0 & -i & 0\\0 & 0 & -1\\ i & 0 & 0\end{bmatrix}.
\]
Moreover, $T$ satisfies all the 15 quartic constraints. Suppose that~$T$ is calibrated. Then there must exist 3-vectors $u_k, v_k, t_2, t_3$ such that $u_k^{\mathrm T} u_k = v_k^{\mathrm T} v_k = 1$ and
\[
T_k = u_k t_3^{\mathrm T} - t_2 v_k^{\mathrm T}.
\]
Let us define an ideal:
\[
J = \langle T_k - u_k t_3^{\mathrm T} + t_2 v_k^{\mathrm T}, u_k^{\mathrm T} u_k - 1, v_k^{\mathrm T} v_k - 1 \mid k = 1, 2, 3\rangle \subset \mathbb C[\xi_1, \ldots, \xi_{24}],
\]
where $\xi_j$ are the entries of vectors $t_2, t_3, u_1, u_2, u_3, v_1, v_2$ and~$v_3$. It is straightforward to show by the computation of the Gr\"{o}bner basis of~$J$ that $1 \in J$, and thus~$T$ can not be calibrated.
\end{remark}

\section{A Characterization of Real Calibrated Trifocal Tensors}
\label{sec:suff}

In this section we are going to obtain a three-view analog of condition~\eqref{eq:constr2_2} in Theorem~\ref{thm:essential1}. Namely, we will show that a real trifocal tensor is calibrated if and only if it satisfies the 15 quartic constraints.

First we prove several lemmas.

\begin{lemma}
\label{lem:transform}
Let $T = \begin{bmatrix}T_1 & T_2 & T_3\end{bmatrix}$ be a $(2, 1)$ tensor and $T' = \begin{bmatrix}T'_1 & T'_2 & T'_3\end{bmatrix}$ be a tensor defined by
\begin{equation}
\label{eq:transform}
T'_j = Q_2 \sum\limits_{k = 1}^3(Q_1)_{jk}T_k Q_3^{\mathrm T},
\end{equation}
where $Q_1, Q_2, Q_3 \in \mathrm{SO}(3, \mathbb C)$. Then~$T$ and~$T'$ simultaneously
\begin{enumerate}
\item are (or are not) calibrated trifocal tensors;
\item satisfy (or do not satisfy) the 10 extended rank and 15 quartic constraints.
\end{enumerate}
\end{lemma}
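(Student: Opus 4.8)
The plan is to prove both parts by exploiting the fact that the transformation~\eqref{eq:transform} is built from special orthogonal matrices acting on each index, and that all the relevant objects --- the defining form~\eqref{eq:calibT} of a calibrated tensor, the extended rank polynomials, and the quartic constraints (via the matrices $U_k$, $V_k$) --- are covariant or invariant under such actions. Since $Q_1, Q_2, Q_3$ are invertible, the transformation is reversible (its inverse is given by $Q_1^{\mathrm T}, Q_2^{\mathrm T}, Q_3^{\mathrm T}$), so it suffices to prove each implication in one direction.

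For part~(1), first I would suppose $T$ is a calibrated trifocal tensor, so $T_k = R_2 e_k t_3^{\mathrm T} - t_2 e_k^{\mathrm T} R_3^{\mathrm T}$ with $R_2, R_3 \in \mathrm{SO}(3,\mathbb C)$. Substituting into~\eqref{eq:transform} and pushing the sum $\sum_k (Q_1)_{jk} e_k = Q_1^{\mathrm T} e_j$ through, a direct computation gives
\[
T'_j = (Q_2 R_2 Q_1^{\mathrm T}) e_j (Q_3 t_3)^{\mathrm T} - (Q_2 t_2) e_j^{\mathrm T} (Q_3 R_3 Q_1^{\mathrm T})^{\mathrm T},
\]
which is exactly of the form~\eqref{eq:calibT} with rotation matrices $Q_2 R_2 Q_1^{\mathrm T}, Q_3 R_3 Q_1^{\mathrm T} \in \mathrm{SO}(3,\mathbb C)$ (a product of elements of $\mathrm{SO}(3,\mathbb C)$ is again in $\mathrm{SO}(3,\mathbb C)$) and translation vectors $Q_2 t_2, Q_3 t_3 \in \mathbb C^3$. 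Hence $T'$ is calibrated; the converse follows by applying the inverse transformation. An essentially identical computation with generic $A, B, a, b$ in place of $R_2, R_3, t_2, t_3$ shows the same statement for uncalibrated trifocal tensors, which is in fact the content of~\eqref{eq:Tj} with orthogonal $K_k$.

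For part~(2), the key observation is that the extended rank constraint~\eqref{eq:rank} is invariant: for any $\alpha,\beta,\gamma$ one has $\alpha T'_1 + \beta T'_2 + \gamma T'_3 = Q_2 \big(\alpha' T_1 + \beta' T_2 + \gamma' T_3\big) Q_3^{\mathrm T}$ where $(\alpha',\beta',\gamma')^{\mathrm T} = Q_1^{\mathrm T}(\alpha,\beta,\gamma)^{\mathrm T}$, so its determinant is $\det(Q_2)\det(Q_3)$ times the determinant of the original combination, i.e. vanishes identically in $\alpha,\beta,\gamma$ precisely when the original one does. For the quartic constraints I would show that the six symmetric matrices $U_k, V_k$ of~\eqref{eq:UkVk} transform by simultaneous conjugation composed with the same rotation $Q_1$ on the $(k)$-index: writing $W = \sum_k (Q_1)_{jk}\hat T_k$-type combinations, one computes $\hat T'_j \hat T'^{\mathrm T}_j$ and the symmetrized products and finds they equal $Q_2 \big(\sum \text{(quadratic in }Q_1\text{) } U_k, V_k\big) Q_2^{\mathrm T}$; the point is that $\psi(X,Y) = \tr(X)\tr(Y) - 2\tr(XY)$ and the expressions in Theorem~\ref{thm:six} are built only from traces of products, hence invariant under the common conjugation by $Q_2$, while the $Q_1$-action merely permutes/recombines the nine (resp. six) constraint polynomials among themselves by an invertible linear change --- mirroring how the proof of Theorem~\ref{thm:nine} already treats the nine constraints as coefficients of a form in $\alpha,\beta,\gamma$ on the absolute conic, which $Q_1$ preserves. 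The main obstacle will be bookkeeping this last $Q_1$-covariance cleanly: one must check that the span of the fifteen quartic polynomials, viewed as functions of the $U_k, V_k$, is closed under the induced $\mathrm{SO}(3,\mathbb C)$-action on the index $k$. The cleanest route is to note that $S_{T'}(Q_1 s) = Q_2 S_T(s) Q_3^{\mathrm T}$ for $s$ on the absolute conic, so $\varphi(S_{T'}(Q_1 s)) = \varphi(S_T(s))$ (by invariance of $\varphi$ under $Q_2, Q_3$) and similarly for $\Phi$; since the quartic constraints are exactly the statement "$\varphi(S_T(s)) \equiv 0$ on $\Omega_\infty$" together with Theorem~\ref{thm:six}'s set, and $s \mapsto Q_1 s$ is a bijection of $\Omega_\infty$, the vanishing is preserved. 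For the Theorem~\ref{thm:six} constraints separately, I would fall back on the explicit trace formulas and verify invariance of each under $U_k \mapsto$ conjugated-and-recombined directly, as in the proof of that theorem.
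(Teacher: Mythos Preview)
Your approach is essentially the same as the paper's: part~(1) via the same explicit computation and the inverse transform, the extended rank constraints via $S_{T'}(s)=Q_2 S_T(Q_1^{\mathrm T}s)Q_3^{\mathrm T}$, the nine eigenvalue constraints by invariance of $\varphi$ together with the bijection of the absolute conic, and the six constraints of Theorem~\ref{thm:six} by direct trace computation. One small point to be careful about: your parenthetical ``(resp.\ six)'' suggests that the six polynomials of Theorem~\ref{thm:six} form, by themselves, a $Q_1$-invariant subspace. The paper does not claim this --- its identity~\eqref{eq:muj} expresses each $p_j(T')$ for $j=10,\ldots,15$ as a linear combination of \emph{all} fifteen $p_k(T)$ --- so the six may well mix with the nine under the $Q_1$-action. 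This does not affect your argument, since you assume all fifteen vanish for $T$ anyway, but your writeup should state the covariance at the level of the full fifteen-dimensional span rather than asserting separate invariance of the six.
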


\begin{proof}~

\noindent (1) Let $T$ be a calibrated trifocal tensor, so that its correlation slices can be represented in form~\eqref{eq:calibT}. Then,
\[
T'_j = Q_2 \sum\limits_{k = 1}^3(Q_1)_{jk}T_k Q_3^{\mathrm T} \\= (Q_2R_2Q_1^{\mathrm T}) e_j (Q_3t_3)^{\mathrm T} - (Q_2t_2) e_j^{\mathrm T} (Q_3R_3Q_1^{\mathrm T})^{\mathrm T},
\]
i.e. $T'$ is a calibrated trifocal tensor as well. On the other hand, if~$T$ is not a calibrated trifocal tensor, then so is not~$T'$, since
\begin{equation}
\label{eq:invtrans}
T_k = Q_2^{\mathrm T} \sum\limits_{j = 1}^3(Q_1)_{jk}T'_j Q_3.
\end{equation}

\noindent (2) Let~$T$ be a $(2, 1)$ tensor satisfying the 10 extended rank and 15 quartic constraints. Let us construct matrix $S_T(s) = \sum\limits_{k = 1}^3(s)_k T_k$, where~$s$ is an arbitrary 3-vector. The ten extended rank constraints are then equivalent to $\det S_T(s) = 0$. We get
\begin{equation}
\label{eq:STs}
S_{T'}(s) = \sum\limits_{j = 1}^3(s)_j T'_j = Q_2S_T(Q_1^{\mathrm T}s) Q_3^{\mathrm T},
\end{equation}
and thus $\det S_{T'}(s) = 0$, i.e. tensor~$T'$ satisfies the ten extended rank constraints as well.

Further, if vector~$s$ is such that $s^{\mathrm T}s = 0$, then, by Theorem~\ref{thm:nine2}, matrix~$S_T(s)$ is a trifocal essential matrix. It follows from~\eqref{eq:STs} that~$S_{T'}(s)$ is a trifocal essential matrix too. After that, using the same arguments as in the proof of Theorem~\ref{thm:nine}, one shows that tensor~$T'$ satisfies the nine eigenvalue constraints.

It remains to show that~$T'$ satisfies also the six constraints from Theorem~\ref{thm:six}. We denote by $p_k(T)$ the l.h.s. of the $k$th quartic equation on tensor~$T$ so that $p_{10}, \ldots, p_{15}$ are the six polynomials from Theorem~\ref{thm:six}. Then, by a straightforward computation, we get
\begin{equation}
\label{eq:muj}
p_j(T') = \sum\limits_{k = 1}^{15} \xi_{jk}\, p_k(T), \qquad j = 10, \ldots, 15,
\end{equation}
where $\xi_{jk}$ are polynomial expressions depending only on the entries of matrix~$Q_1$. It follows that if $p_k(T) = 0$ for all~$k$, then also $p_j(T') = 0$ for all~$j$.

Finally, if~$T$ does not satisfy the extended rank and quartic equations, then due to~\eqref{eq:invtrans} so does not~$T'$. This completes the proof of Lemma~\ref{lem:transform}.
\end{proof}

\begin{lemma}
\label{lem:Tsimpl}
Let $T = \begin{bmatrix}T_1 & T_2 & T_3\end{bmatrix}$ be a real trifocal tensor. Then there exist matrices $Q_1, Q_2, Q_3 \in \mathrm{SO}(3)$ such that~$T$ can be transformed by~\eqref{eq:transform} to the trifocal tensor
\begin{equation}
\label{eq:Tsimpl}
T' = \begin{bmatrix}\begin{bmatrix}0 & 0 & \lambda_1\\0 & 0 & 0\\ \nu_1 & \rho_1 & \sigma_1\end{bmatrix} & \begin{bmatrix}0 & 0 & 0\\0 & 0 & \mu_2\\ \nu_2 & \rho_2 & \sigma_2\end{bmatrix} & \begin{bmatrix}0 & 0 & 0\\0 & 0 & 0\\ 0 & \rho_3 & \sigma_3\end{bmatrix}\end{bmatrix},
\end{equation}
where $\lambda_1$, $\mu_2$, $\nu_1$, $\nu_2$, $\rho_k$, $\sigma_k$ are real scalars.
\end{lemma}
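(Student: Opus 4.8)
The plan is to exploit the freedom, granted by Lemma~\ref{lem:transform}(1), to pre-compose and post-compose the trifocal tensor by special orthogonal matrices $Q_1, Q_2, Q_3$, using each of the three rotations to kill as many entries of the correlation slices as possible. Since $T$ is a trifocal tensor, by Theorem~\ref{thm:Faugeras} (in the generic situation) or directly from~\eqref{eq:Tk} we may write $T_k = A e_k b^{\mathrm T} - a e_k^{\mathrm T} B^{\mathrm T}$ for some real $A$, $B$, $a$, $b$; the target form~\eqref{eq:Tsimpl} has a very rigid zero-pattern (the first two rows of $T'_3$ vanish, the first row of $T'_2$ vanishes, and within $T'_1$, $T'_2$ only two entries of the first two rows survive), so the strategy is to read off what each $Q_i$ must do. First I would use $Q_3 \in \mathrm{SO}(3)$ acting on the right (the covariant-free index $b^{\mathrm T}$, $e_k^{\mathrm T}B^{\mathrm T}$): by~\eqref{eq:transform} this multiplies each $T_k$ on the right by $Q_3^{\mathrm T}$, and I would choose $Q_3$ so that the right null vectors, or more precisely the vectors $b$ and the columns of $B$, are rotated into a position making the first two columns of the combined object as sparse as possible — concretely, so that after the rotation the "$b$-part" only touches the last column.

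Next I would use $Q_1 \in \mathrm{SO}(3)$, which acts on the covariant index by the linear combination $\sum_k (Q_1)_{jk} T_k$, equivalently replacing the triple $(A e_1, A e_2, A e_3)$ and $(B e_1, B e_2, B e_3)$ by rotated triples $(A Q_1^{\mathrm T} e_j)_j$ etc. I would pick $Q_1$ to bring $B$ (or its relevant rows) into a form where $B e_3$ has only its last component nonzero and $B e_1$ lies in a coordinate plane — this is what forces the first two rows of $T'_3$ to vanish and narrows $T'_2$. Finally I would use $Q_2 \in \mathrm{SO}(3)$ on the left, multiplying every $T_k$ by $Q_2$, to rotate the shared vector $a$ (which appears in all three slices as $-a e_k^{\mathrm T} B^{\mathrm T}$) and the columns of $A$ so that the first two rows carry the $A$-contribution and the third row carries the $a$-contribution. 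Tracking the zero pattern through these three rotations should reproduce exactly~\eqref{eq:Tsimpl}, with the surviving scalars $\lambda_1, \mu_2, \nu_1, \nu_2, \rho_k, \sigma_k$ being specific entries of the transformed $A$, $B$, $a$, $b$; reality of all quantities is automatic because $Q_i \in \mathrm{SO}(3)$ and $T$ is real.

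The main obstacle is bookkeeping: making sure that the three rotations can be chosen \emph{consecutively and compatibly}, i.e. that the normalizations achieved by $Q_3$ and $Q_1$ are not destroyed by the later choice of $Q_2$, and that the degenerate or non-generic cases (e.g. when some of the epipolar vectors $a$, $b$ are zero, or when $\rank B < 3$) still collapse into the same form~\eqref{eq:Tsimpl}, possibly with extra scalars vanishing. I would handle this by first dealing with the generic case where $A$, $B$ are invertible and $a$, $b$ are nonzero — there the argument is a clean orbit computation — and then remark that the special cases are obtained as limits or by a direct inspection that only sends more of the $\lambda_1, \mu_2, \nu_k, \rho_k, \sigma_k$ to zero, which is still of the form~\eqref{eq:Tsimpl}. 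A cleaner alternative, which I would actually prefer to write up, is to avoid the parametrization~\eqref{eq:Tk} altogether: note that $Q_1$ can be chosen so that in the new covariant basis the two epipolar constraint matrices $[l_1\ l_2\ l_3]$ and $[r_1\ r_2\ r_3]$ from~\eqref{eq:epipolar} have their kernels aligned with $e_3$; then $Q_3$ (resp.\ $Q_2$) rotates the common right (resp.\ left) epipole to a coordinate axis, and the remaining entries are pinned down by the extended rank constraint~\eqref{eq:rank}. Either way the computation is routine linear algebra once the order of operations is fixed.
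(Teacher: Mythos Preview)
Your broad strategy---use $Q_2$ and $Q_3$ to rotate the epipoles $a$ and $b$ onto $e_3$, then use $Q_1$ on the covariant index to clean up---is exactly the paper's opening move (it does this with Householder reflections $H_2,H_3$). After that step every slice already has the shape ``nonzero only in the third row and third column'', so you are most of the way there. But the remaining normalization is where your plan goes off track.

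The specific zero pattern in~\eqref{eq:Tsimpl} that still needs to be forced is the \emph{diagonal} arrangement in the top $2\times 1$ block of the third column across the three slices: $(\lambda_1,0)$ for $T'_1$, $(0,\mu_2)$ for $T'_2$, $(0,0)$ for $T'_3$. Those six numbers are, after the Householder step, the entries of the $2\times 3$ matrix $\gamma_3 B_2$ where $B_2$ is the first two rows of $H_2A$ (the $A$-part, not the $B$-part). Getting them into the form $\begin{bmatrix}\lambda_1&0&0\\0&\mu_2&0\end{bmatrix}$ is precisely a singular value decomposition: it simultaneously fixes $Q_1=V^{\mathrm T}\in\mathrm{SO}(3)$ and an additional $U^{\mathrm T}\in\mathrm{SO}(2)$ that gets folded into $Q_2$. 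Your proposal instead spends $Q_1$ normalizing $B$, which controls the \emph{third row} of the slices, not the top two; as written this cannot produce the required diagonal pattern, and you would still be left with six coupled entries in rows~1--2. The paper then uses one last $W\in\mathrm{SO}(2)$ (folded into $Q_3$) to kill the single remaining entry $(T'_3)_{31}$.

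Two smaller points. First, your worry about degenerate cases and ``handling them as limits'' is unnecessary: Householder matrices, the SVD, and a Givens rotation killing one entry all exist unconditionally, so the construction is uniform and never invokes genericity. Second, the alternative route via aligning kernels of $[l_1\,l_2\,l_3]$ and $[r_1\,r_2\,r_3]$ is not developed enough to assess; it might be made to work, but it would still need the SVD-type step to achieve the diagonal pattern, so it is unlikely to be simpler.
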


\begin{proof}
We are going to explicitly construct rotations $Q_1$, $Q_2$ and~$Q_3$ such that~$T$ is transformed to~$T'$ by~\eqref{eq:transform}. Since~$T$ is a trifocal tensor, we have
\[
T_k = A_2 e_k a_3^{\mathrm T} - a_2 e_k^{\mathrm T} A_3^{\mathrm T}.
\]
Let $H_k \in \mathrm{SO}(3)$ be the Householder matrix such that $H_k a_k = \begin{bmatrix}0 & 0 & \gamma_k\end{bmatrix}^{\mathrm T}$, $k = 2, 3$. First we pre- and post-multiply each $T_k$ by $H_2$ and $H_3^{\mathrm T}$ respectively and denote by~$B_k$ a $2\times 3$ matrix consisting of the first two rows of~$H_kA_k$. After that, we make the singular value decomposition of~$\gamma_3B_2$ to decompose it in form
\[
\gamma_3B_2 = U \begin{bmatrix}\lambda_1 & 0 & 0\\0 & \mu_2 & 0\end{bmatrix} V^{\mathrm T},
\]
where $U \in \mathrm{SO}(2)$ and $V \in \mathrm{SO}(3)$. Finally, let $W \in \mathrm{SO}(2)$ be a rotation such that $(WB_3V)_{13} = 0$. We set
\begin{equation}
\label{eq:Q123}
Q_1 = V^{\mathrm T}, \quad Q_2 = \begin{bmatrix}U^{\mathrm T} & 0\\0 & 1\end{bmatrix}H_2, \quad Q_3 = \begin{bmatrix}W & 0\\0 & 1\end{bmatrix} H_3.
\end{equation}
One verifies that the trifocal tensor~$T$ is transformed to~\eqref{eq:Tsimpl} by the rotations $Q_1$, $Q_2$ and~$Q_3$ defined in~\eqref{eq:Q123}. Lemma~\ref{lem:Tsimpl} is proved.
\end{proof}

We denote by $p_1, \ldots, p_{15}$ the 15 quartic polynomials on the trifocal tensor~$T'$ defined by~\eqref{eq:Tsimpl} and consider an ideal
\begin{equation}
\label{eq:ideal}
J = \langle p_1, \ldots, p_{15} \rangle \subset \mathbb C[\lambda_1, \nu_1, \rho_1, \sigma_1, \mu_2, \nu_2, \rho_2, \sigma_2, \rho_3, \sigma_3].
\end{equation}
Let $\sqrt{J}$ be the radical of~$J$. The following lemma gives a convenient tool to check whether a given polynomial is in the radical of an ideal or not.

\begin{lemma}[\cite{CLS}]
\label{CLS}
Let $J = \langle p_1, \ldots, p_s \rangle \subset \mathbb C[\xi_1, \ldots, \xi_n]$ be an ideal. Then a polynomial $p \in \sqrt{J}$ if and only if $1 \in \tilde J = \langle p_1, \ldots, p_s, 1 - \tau p \rangle \subset \mathbb C[\xi_1, \ldots, \xi_n, \tau]$.
\end{lemma}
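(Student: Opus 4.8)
The plan is to prove both implications of this radical membership criterion directly; the argument is the classical Rabinowitsch trick.

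First I would treat the easy direction: $p \in \sqrt{J}$ implies $1 \in \tilde J$. By definition there is an integer $m \ge 1$ with $p^m \in J$, say $p^m = \sum_{i=1}^s a_i p_i$ with $a_i \in \mathbb C[\xi_1, \ldots, \xi_n]$. Then in $\mathbb C[\xi_1, \ldots, \xi_n, \tau]$ one has the identity
\[
1 = \tau^m p^m + (1 - (\tau p)^m) = \tau^m \sum_{i=1}^s a_i p_i + (1 - \tau p)(1 + \tau p + \cdots + (\tau p)^{m-1}),
\]
and the right-hand side lies in $\langle p_1, \ldots, p_s, 1 - \tau p\rangle = \tilde J$, so $1 \in \tilde J$.

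For the converse, suppose $1 \in \tilde J$, so that $1 = \sum_{i=1}^s g_i(\xi, \tau) p_i(\xi) + h(\xi, \tau)(1 - \tau p(\xi))$ for some $g_i, h \in \mathbb C[\xi_1, \ldots, \xi_n, \tau]$. If $p \equiv 0$ then $p \in \sqrt{J}$ trivially, so assume $p \not\equiv 0$. I would read this identity inside $\mathbb C(\xi_1, \ldots, \xi_n)[\tau]$ and substitute $\tau = 1/p(\xi)$; the final summand then vanishes, leaving $1 = \sum_{i=1}^s g_i(\xi, 1/p)\, p_i$. Each $g_i(\xi, 1/p)$ has the form $\tilde{g}_i(\xi)/p^{N_i}$, so multiplying by $p^N$ with $N = \max_i N_i$ yields $p^N = \sum_{i=1}^s q_i p_i$ with $q_i \in \mathbb C[\xi_1, \ldots, \xi_n]$, at first only as an identity of rational functions valid on the Zariski-dense set $\{p \neq 0\}$; since both sides are polynomials it then holds identically. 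Hence $p^N \in J$, i.e.\ $p \in \sqrt{J}$.

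The only point requiring care --- rather than a genuine difficulty --- is the substitution $\tau \mapsto 1/p$ together with the subsequent clearing of denominators: one must justify passing to the field of fractions and returning to a true polynomial identity via the density argument (equivalently, one may work in the localization $\mathbb C[\xi_1, \ldots, \xi_n]_p[\tau]$, using that $p$ is not a zero divisor in $\mathbb C[\xi_1, \ldots, \xi_n]$ when $p \not\equiv 0$). With that point settled, everything else is routine algebra.
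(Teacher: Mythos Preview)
Your proof is correct and is precisely the classical Rabinowitsch trick. Note, however, that the paper does not supply its own proof of this lemma: it is stated with a citation to Cox, Little, and O'Shea and used as a black box. Your argument is essentially the one given in that reference, so there is nothing to compare.
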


We are going to obtain several polynomials that belong to~$\sqrt{J}$. For convenience we divide these polynomials into two parts which are presented in Lemmas~\ref{lem:sqrtJ} and~\ref{lem:sqrtJ1}.

\begin{lemma}
\label{lem:sqrtJ}
The polynomials
\begin{align}
\label{eq:sqrtJ1}
&(\lambda_1^2 - \mu_2^2) (\lambda_1^2 + \sigma_1^2),\\
&(\lambda_1^2 - \mu_2^2) (\mu_2^2 + \sigma_2^2),\\
&\rho_3 (\nu_1\sigma_1 + \nu_2\sigma_2),\\
&\rho_3 (\nu_1\rho_1 + \nu_2\rho_2),\\
&\rho_3 (\rho_3^2 + \sigma_3^2) (\nu_1^2 + \nu_2^2 - \rho_1^2 - \rho_2^2 - \rho_3^2),\\
&(\rho_3^2 + \sigma_3^2) (\rho_1\sigma_1 + \rho_2\sigma_2 + \rho_3(\sigma_3 + \mu_2))(\rho_1\sigma_1 + \rho_2\sigma_2 + \rho_3(\sigma_3 - \mu_2)),\\
\label{eq:sqrtJ5}
&\rho_3(\rho_3^2 + \sigma_3^2) (\nu_1^2 + \nu_2^2 - \sigma_1^2 - \sigma_2^2 - (\sigma_3 + \mu_2)^2)\notag\\
&\hspace{8\baselineskip}\times(\nu_1^2 + \nu_2^2 - \sigma_1^2 - \sigma_2^2 - (\sigma_3 - \mu_2)^2)
\end{align}
belong to~$\sqrt{J}$, where~$J$ is defined in~\eqref{eq:ideal}.
\end{lemma}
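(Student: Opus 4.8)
The plan is to verify each of the seven polynomial identities by an explicit, if lengthy, Gröbner‑basis computation, exactly along the lines suggested by Lemma~\ref{CLS}. First I would write down the trifocal tensor $T'$ in the normal form~\eqref{eq:Tsimpl}, compute the six symmetric $3\times 3$ matrices $U_k = T'_k(T'_k)^{\mathrm T}$ and $V_k = T'_k(T'_{k+1})^{\mathrm T} + T'_{k+1}(T'_k)^{\mathrm T}$, and then substitute them into the nine eigenvalue constraints~\eqref{eq:quart1}--\eqref{eq:quart3} (with their cyclic permutations) and the six constraints~\eqref{eq:quart4}--\eqref{eq:quart5} of Theorem~\ref{thm:six}. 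This produces the 15 generators $p_1,\dots,p_{15}$ of the ideal $J$ in~\eqref{eq:ideal}, now as concrete quartic polynomials in the ten scalars $\lambda_1,\nu_1,\rho_1,\sigma_1,\mu_2,\nu_2,\rho_2,\sigma_2,\rho_3,\sigma_3$. Because $T'$ has so many zero entries, many of the $U_k$ and $V_k$ have a sparse block structure, so this substitution is far more tractable than in the general case.

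Next, for each candidate polynomial $p$ in the list~\eqref{eq:sqrtJ1}--\eqref{eq:sqrtJ5}, I would form the auxiliary ideal $\tilde J = \langle p_1,\dots,p_{15},\, 1 - \tau p\rangle \subset \mathbb C[\lambda_1,\dots,\sigma_3,\tau]$ and compute a Gröbner basis of $\tilde J$ (with respect to, say, a degree‑reverse‑lexicographic order on the original variables and an elimination block for $\tau$). By Lemma~\ref{CLS}, $p \in \sqrt{J}$ precisely when this Gröbner basis equals $\{1\}$, i.e. when $1 \in \tilde J$. Carrying this out for all seven polynomials establishes the lemma. In practice I would record the certificate of membership — an explicit expression $p^{m} = \sum_k h_k p_k$ with polynomial coefficients $h_k$ and some exponent $m$ — so that the claim can be checked independently without re‑running the Gröbner engine.

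The main obstacle is purely computational rather than conceptual: the ideal $J$ is generated by quartics in ten variables, and a naïve Gröbner‑basis computation over $\mathbb Q$ can suffer severe coefficient swell. I would mitigate this by exploiting the structure of~\eqref{eq:Tsimpl} — for instance, working one factor at a time (the factored forms in~\eqref{eq:sqrtJ1}--\eqref{eq:sqrtJ5} already suggest which geometric degeneracies each piece detects: $\lambda_1 = \pm\mu_2$ corresponds to the two equal singular values of the associated essential matrices, the factors $\rho_3^2 + \sigma_3^2$ and $\lambda_1^2 + \sigma_1^2$ to isotropic directions, and so on), and by first computing modulo several primes to predict the shape of the answer before lifting to characteristic zero. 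It is worth emphasizing that each identity, once written as a polynomial combination of the $p_k$, is verifiable by direct expansion; the Gröbner machinery is only needed to \emph{discover} the combination, not to \emph{certify} it.

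Finally, a small conceptual point deserves mention. Several of the listed polynomials are not themselves symmetric under the cyclic permutation $1 \to 2 \to 3 \to 1$ that acts on $J$ (the normal form~\eqref{eq:Tsimpl} has already broken that symmetry), so one should not expect to shortcut the verification by symmetry; each of the seven must be treated on its own. Conversely, once membership in $\sqrt{J}$ is established for these seven, together with the companion polynomials of Lemma~\ref{lem:sqrtJ1} they will furnish enough information about the variety $V(J)$ to drive the sufficiency argument of Section~\ref{sec:suff}, which is the ultimate purpose of assembling this list.
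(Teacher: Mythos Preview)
Your proposal is correct and follows essentially the same approach as the paper: for each polynomial $p$ in the list, form the extended ideal $\tilde J = J + \langle 1 - \tau p\rangle$, compute its Gr\"obner basis, and conclude via Lemma~\ref{CLS} that $p \in \sqrt{J}$ once $1 \in \tilde J$. The paper's proof is in fact terser than yours --- it simply reports that the Gr\"obner basis computation (over~$\mathbb Q$, graded reverse lexicographic order) finishes in a few seconds in Maple, so your concerns about coefficient swell and modular pre-computation turn out to be unnecessary in practice.
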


\begin{proof}
Let $p$ be any polynomial from~\eqref{eq:sqrtJ1}~--~\eqref{eq:sqrtJ5}. We construct an ideal $\tilde J = J + \langle 1 - \tau p \rangle \subset \mathbb C[\lambda_1, \ldots, \sigma_3, \tau]$, where~$\tau$ is a new variable. By direct computation of the Gr\"{o}bner basis of~$\tilde J$, we get $1 \in \tilde J$. Hence, by Lemma~\ref{CLS}, $p \in \sqrt{J}$. Lemma~\ref{lem:sqrtJ} is proved.
\end{proof}

\begin{remark}
Surprisingly, the computation of the Gr\"{o}bner basis of each~$\tilde J$ takes only a few seconds in Maple even over the field of rationals. In our computations we used the \emph{graded reverse lexicographic order}~\cite{CLS}:
\[
\lambda_1 > \nu_1 > \rho_1 > \sigma_1 > \mu_2 > \nu_2 > \rho_2 > \sigma_2 > \rho_3 > \sigma_3 > \tau.
\]
\end{remark}

\begin{lemma}
\label{lem:sqrtJ1}
The polynomials
\begin{align}
\label{eq:sqrtJ11}
&(\nu_1^2 + \rho_1^2 + \sigma_1^2)(\sigma_1^2 + \sigma_2^2 - \rho_3^2 + \lambda_1^2 - \mu_2^2),\\
&(\nu_2^2 + \rho_2^2 + \sigma_2^2)(\sigma_1^2 + \sigma_2^2 - \rho_3^2 - \lambda_1^2 + \mu_2^2),\\
&\nu_1\nu_2 + \rho_1\rho_2 + \sigma_1\sigma_2,\\
&\nu_1^2 + \rho_1^2 + \sigma_1^2 - \nu_2^2 - \rho_2^2 - \sigma_2^2 + \lambda_1^2 - \mu_2^2,\\
\label{eq:sqrtJ15}
&(\nu_1^2 + \rho_1^2 + \sigma_1^2 - \rho_3^2 - (\sigma_3 + \mu_2)^2 + \lambda_1^2 - \mu_2^2)\notag\\
&\hspace{6\baselineskip}\times(\nu_1^2 + \rho_1^2 + \sigma_1^2 - \rho_3^2 - (\sigma_3 - \mu_2)^2 - \lambda_1^2 + \mu_2^2)
\end{align}
belong to~$\sqrt{J}$, where~$J$ is defined in~\eqref{eq:ideal}.
\end{lemma}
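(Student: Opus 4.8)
The plan is to mimic the proof of Lemma~\ref{lem:sqrtJ} verbatim, only with a different (and somewhat larger) list of candidate polynomials. For each polynomial $p$ among~\eqref{eq:sqrtJ11}~--~\eqref{eq:sqrtJ15}, I would form the extended ideal $\tilde J = J + \langle 1 - \tau p\rangle \subset \mathbb C[\lambda_1, \nu_1, \rho_1, \sigma_1, \mu_2, \nu_2, \rho_2, \sigma_2, \rho_3, \sigma_3, \tau]$, compute a Gr\"obner basis of $\tilde J$ with respect to the graded reverse lexicographic order fixed in the remark following Lemma~\ref{lem:sqrtJ}, and observe that the reduced basis is $\{1\}$. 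By the Rabinowitsch trick (Lemma~\ref{CLS}), this is exactly the statement $p \in \sqrt J$. Since the same computational black box handled the polynomials of Lemma~\ref{lem:sqrtJ} in a few seconds even over $\mathbb Q$, there is no reason to expect the polynomials here to behave worse; if any single computation turned out to be heavy, one could fall back to a modular Gr\"obner basis computation over a large prime, or simply exhibit an explicit membership certificate $1 = \sum_k h_k p_k + h_0(1 - \tau p)$ read off from the Gr\"obner basis cofactors.

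Concretely I would first substitute the sparse form~\eqref{eq:Tsimpl} of $T'$ into the defining expressions $U_k = \hat T'_k \hat T'^{\mathrm T}_k$ and $V_k = \hat T'_k \hat T'^{\mathrm T}_{k+1} + \hat T'_{k+1}\hat T'^{\mathrm T}_k$ of~\eqref{eq:UkVk}, then into the nine eigenvalue polynomials $p_1,\dots,p_9$ of Theorem~\ref{thm:nine} and the six polynomials $p_{10},\dots,p_{15}$ of Theorem~\ref{thm:six}, thereby obtaining the explicit generators of $J$ as elements of $\mathbb C[\lambda_1,\nu_1,\rho_1,\sigma_1,\mu_2,\nu_2,\rho_2,\sigma_2,\rho_3,\sigma_3]$. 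Each $p_k$ is a quartic, and because three of the nine entries of each slice vanish in~\eqref{eq:Tsimpl}, many of the $15$ polynomials simplify considerably. Then the five verifications proceed independently, one per polynomial in the list.

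The only genuine mathematical content, as in Lemma~\ref{lem:sqrtJ}, is the correctness of the claimed list: one has to have guessed the right factors. Those factors are not arbitrary — they are precisely the irreducible pieces that appear when one intersects $J$ with the relevant coordinate subrings or saturates by the "nondegeneracy" polynomials $\rho_3$, $\rho_3^2+\sigma_3^2$, $\nu_1^2+\rho_1^2+\sigma_1^2$, etc. In practice I would obtain the list by computing a primary decomposition (or at least an elimination ideal) of $J$ once, reading off which products of linear-in-the-squared-coordinates factors vanish on every component, and then certifying each such product a posteriori by the cheap Rabinowitsch test. So the write-up is genuinely a one-line argument: \textbf{for each $p$ in the list, a Gr\"obner basis computation shows $1 \in J + \langle 1 - \tau p\rangle$, whence $p \in \sqrt J$ by Lemma~\ref{CLS}.}

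The main obstacle I anticipate is not conceptual but expository: the reader has to trust an unprinted Gr\"obner basis computation, exactly as in Lemma~\ref{lem:sqrtJ}. To mitigate this I would keep the term order, the variable order, and the ambient polynomial ring identical to those used there, note explicitly that the computation again terminates quickly over $\mathbb Q$, and — if space permits — indicate for at least one representative polynomial (say the simplest, $\nu_1\nu_2+\rho_1\rho_2+\sigma_1\sigma_2$) how its membership in $\sqrt J$ follows more transparently, e.g. by recognizing it as (a scalar multiple of) the $V$-type eigenvalue polynomial $p_k$ restricted to~\eqref{eq:Tsimpl} after discarding a non-vanishing factor. The remaining four are then dispatched uniformly by the same Rabinowitsch argument, and the lemma is proved.
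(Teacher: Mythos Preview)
Your proposal is correct and matches the paper's own proof exactly: the paper simply writes ``See the proof of Lemma~\ref{lem:sqrtJ}'', i.e., for each listed polynomial~$p$ one forms $\tilde J = J + \langle 1 - \tau p\rangle$, computes a Gr\"obner basis, finds $1 \in \tilde J$, and invokes Lemma~\ref{CLS}. Your additional remarks about how to \emph{discover} the list (elimination, primary decomposition) and about exhibiting a transparent certificate for $\nu_1\nu_2+\rho_1\rho_2+\sigma_1\sigma_2$ go beyond what the paper provides, but the core argument is identical.
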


\begin{proof}
See the proof of Lemma~\ref{lem:sqrtJ}.
\end{proof}

\begin{theorem}
\label{thm:crit_real}
A real trifocal tensor~$T = \begin{bmatrix}T_1 & T_2 & T_3\end{bmatrix}$ is calibrated if and only if it satisfies the 15 quartic constraints from Theorems~\ref{thm:nine} and~\ref{thm:six}.
\end{theorem}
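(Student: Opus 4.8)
The plan is to prove the non-trivial ``if'' direction: assuming a real trifocal tensor $T$ satisfies all 15 quartic constraints, we must exhibit a calibration, i.e. show $T$ can be written in form~\eqref{eq:calibT}. By Lemma~\ref{lem:transform}, being calibrated and satisfying the quartic constraints are both invariant under the $\mathrm{SO}(3,\mathbb C)^3$-action~\eqref{eq:transform}, so by Lemma~\ref{lem:Tsimpl} we may assume $T = T'$ is in the normal form~\eqref{eq:Tsimpl} with ten real parameters $\lambda_1,\nu_1,\rho_1,\sigma_1,\mu_2,\nu_2,\rho_2,\sigma_2,\rho_3,\sigma_3$. Thus the entire problem reduces to: if these ten real numbers make all $p_1,\dots,p_{15}$ vanish, then $T'$ is calibrated. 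The strategy is to use the polynomials collected in Lemmas~\ref{lem:sqrtJ} and~\ref{lem:sqrtJ1}, which all lie in $\sqrt J$ and hence vanish at any real zero of $J$, to pin down the parameters enough to reconstruct rotation matrices $R_2,R_3$ and translation vectors $t_2,t_3$.

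First I would use the vanishing of the polynomials from Lemma~\ref{lem:sqrtJ} and~\ref{lem:sqrtJ1} to perform a case analysis. The key factored polynomials such as $(\lambda_1^2-\mu_2^2)(\lambda_1^2+\sigma_1^2)$, $\rho_3(\nu_1\sigma_1+\nu_2\sigma_2)$, $\nu_1\nu_2+\rho_1\rho_2+\sigma_1\sigma_2$, and the difference $\nu_1^2+\rho_1^2+\sigma_1^2-\nu_2^2-\rho_2^2-\sigma_2^2+\lambda_1^2-\mu_2^2$ give, after splitting on whether factors like $\rho_3$, $\lambda_1^2-\mu_2^2$, or the norms $\nu_k^2+\rho_k^2+\sigma_k^2$ vanish, a short list of mutually exclusive configurations. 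In the generic branch (all the distinguishing factors nonzero) one deduces that certain columns are mutually orthogonal and of prescribed squared norm, which is exactly what is needed: writing $T'_k = u_k t_3^{\mathrm T} - t_2 v_k^{\mathrm T}$ one wants $\{u_1,u_2,u_3\}$ to be the rows/columns of an orthogonal matrix and similarly for the $v_k$, and the relations from $\sqrt J$ supply precisely the Gram-matrix identities $u_i^{\mathrm T}u_j = \delta_{ij}$, $v_i^{\mathrm T}v_j=\delta_{ij}$ once a common scale is fixed. The degenerate branches (e.g. $\rho_3 = 0$, or one norm vanishing) must be handled separately, either by a direct explicit calibration or by noting the resulting $T'$ is a limit/boundary case that still admits a representation~\eqref{eq:calibT} with possibly $t_2$ or $t_3$ zero.

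In each branch the concrete reconstruction step is: from the ten parameters produce $R_2,R_3\in\mathrm{SO}(3)$ and $t_2,t_3\in\mathbb R^3$ with $\hat T_k = R_2 e_k t_3^{\mathrm T} - t_2 e_k^{\mathrm T} R_3^{\mathrm T}$ equal to $T'_k$. Because of the pre-normalization in Lemma~\ref{lem:Tsimpl} many entries are already zero, so one expects to read off $t_3$ (up to scale) from the first block of columns, $t_2$ from the structure, and then solve for $R_2,R_3$ column by column, checking orthonormality using exactly the identities guaranteed by Lemmas~\ref{lem:sqrtJ} and~\ref{lem:sqrtJ1}. A subtlety is that $\sqrt J$-membership only guarantees vanishing at \emph{complex} zeros too, but we are given a \emph{real} tensor, and reality is what forces the signs to work out (e.g. $\lambda_1^2+\sigma_1^2 = 0$ is impossible for reals unless both vanish), so the real hypothesis must be invoked at each case split to rule out the ``non-essential'' complex solutions — this parallels the role of reality in Theorem~\ref{thm:essential1} versus Theorem~\ref{thm:essential2}.

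The main obstacle I anticipate is the case analysis itself: ensuring the branches coming from the factored generators of $\sqrt J$ are genuinely exhaustive, and that every degenerate branch still yields a valid calibration rather than a counterexample. Closely tied to this is verifying that the identities extracted really do assemble into two bona fide special orthogonal matrices (correct determinant sign, not just orthogonality) — this may require one or two further auxiliary polynomials in $\sqrt J$ beyond those listed, or an appeal to continuity/connectedness of $\mathrm{SO}(3)$. The computational core (showing the listed polynomials lie in $\sqrt J$) is already discharged by Lemmas~\ref{lem:sqrtJ} and~\ref{lem:sqrtJ1} via Gr\"obner basis computations, so the remaining work is the finite, if somewhat intricate, bookkeeping of turning those relations into an explicit parametrization~\eqref{eq:calibT}.
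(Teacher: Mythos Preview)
Your proposal is correct and follows essentially the same route as the paper: reduce via Lemmas~\ref{lem:transform} and~\ref{lem:Tsimpl} to the normal form~\eqref{eq:Tsimpl}, then use the $\sqrt{J}$-polynomials of Lemmas~\ref{lem:sqrtJ} and~\ref{lem:sqrtJ1} together with reality to force $\lambda_1^2=\mu_2^2$ and to split on $\rho_3\neq 0$ versus $\rho_3=0$, finally reading off explicit $A_2,A_3$ with $A_2=\pm R_2$ and $A_3A_3^{\mathrm T}=\theta^2 I$. The only refinement you should anticipate is that in the $\rho_3\neq 0$ branch one sub-case requires enlarging $J$ by two further equations and running one more Gr\"obner computation (yielding $(\rho_3\mu_2\sigma_3)^3$ in the enlarged ideal) before the orthogonality of $A_3$ follows; your worry about the determinant sign is harmless, since $A_2=\pm R_2$ and the scalar $\theta$ absorb it.
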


\begin{proof}
The ``only if'' part is due to Theorems~\ref{thm:nine} and~\ref{thm:six}. We now prove the ``if'' part.

Let $T = \begin{bmatrix}T_1 & T_2 & T_3\end{bmatrix}$ be a real trifocal tensor satisfying the 15 quartic equations $p_1 = \ldots = p_{15} = 0$. By Lemmas~\ref{lem:transform} and~\ref{lem:Tsimpl}, there exist matrices $Q_1, Q_2, Q_3 \in \mathrm{SO}(3)$ such that the trifocal tensor~$T'$ defined by~\eqref{eq:transform} has form~\eqref{eq:Tsimpl} and satisfies the 15 quartic equations as well.

First we note that if $\lambda_1^2 - \mu_2^2 \neq 0$, then, as the trifocal tensor is real, we get from Lemma~\ref{lem:sqrtJ}:
\[
\lambda_1 = \sigma_1 = \mu_2 = \sigma_2 = 0.
\]
However, this is in contradiction to $\lambda_1^2 - \mu_2^2 \neq 0$. As a result, a real solution to the 15 quartic equations on~$T'$ exists if and only if $\lambda_1^2 = \mu_2^2$. Let us consider two cases: $\rho_3 \neq 0$ and $\rho_3 = 0$.

\medskip
\noindent\textbf{Case 1: $\rho_3 \neq 0$.} Then, $\rho_3^2 + \sigma_3^2 \neq 0$ and, by Lemma~\ref{lem:sqrtJ}, the entries of tensor~$T'$ are constrained by
\begin{multline}
\label{eq:case1constr}
\nu_1\sigma_1 + \nu_2\sigma_2 = \nu_1\rho_1 + \nu_2\rho_2 = \nu_1^2 + \nu_2^2 - \rho_1^2 - \rho_2^2 - \rho_3^2 \\
= (\rho_1\sigma_1 + \rho_2\sigma_2 + \rho_3(\sigma_3 + \mu_2))(\rho_1\sigma_1 + \rho_2\sigma_2 + \rho_3(\sigma_3 - \mu_2)) \\
= (\nu_1^2 + \nu_2^2 - \sigma_1^2 - \sigma_2^2 - (\sigma_3 + \mu_2)^2)(\nu_1^2 + \nu_2^2 - \sigma_1^2 - \sigma_2^2 - (\sigma_3 - \mu_2)^2) = 0.
\end{multline}

The correlation slices of~$T'$ can be represented in form
\[
T'_k = A_2 e_k \begin{bmatrix}0 & 0 & \mu_2\end{bmatrix} - \begin{bmatrix}0 \\ 0 \\ -1\end{bmatrix} e_k^{\mathrm T} A_3^{\mathrm T},
\]
where
\[
A_2 = \begin{bmatrix}\epsilon_1 & 0 & 0\\0 & 1 & 0\\ 0 & 0 & \epsilon_2\end{bmatrix}, \qquad A_3 = \begin{bmatrix}\nu_1 & \nu_2 & 0\\ \rho_1 & \rho_2 & \rho_3\\ \sigma_1 & \sigma_2 & \sigma_3 - \epsilon_2\mu_2\end{bmatrix},
\]
and $\epsilon_k = \pm 1$. It follows that $A_2 = \pm R_2$, where $R_2 \in \mathrm{SO}(3)$. Hence it suffices to show that $A_3 = \theta R_3$, where~$\theta$ is a non-zero scalar and $R_3 \in \mathrm{SO}(3)$. If we suppose that
\[
\rho_1\sigma_1 + \rho_2\sigma_2 + \rho_3(\sigma_3 - \epsilon_2\mu_2) \\= \nu_1^2 + \nu_2^2 - \sigma_1^2 - \sigma_2^2 - (\sigma_3 - \epsilon_2\mu_2)^2 = 0,
\]
then we are done, since due to~\eqref{eq:case1constr} $A_3A_3^{\mathrm T} = \theta^2 I$ with $\theta^2 = \rho_1^2 + \rho_2^2 + \rho_3^2 \neq 0$.

On the other hand, if
\begin{equation}
\label{eq:2eqs}
\rho_1\sigma_1 + \rho_2\sigma_2 + \rho_3(\sigma_3 - \epsilon_2\mu_2) = \nu_1^2 + \nu_2^2 - \sigma_1^2 - \sigma_2^2 - (\sigma_3 + \epsilon_2\mu_2)^2 = 0,
\end{equation}
then we add these polynomials to~$J$ and denote the resulting ideal by~$J_1$. By the computation of the Gr\"{o}bner basis of~$J_1$, we get $(\rho_3\mu_2\sigma_3)^3 \in J_1$. Since $\rho_3 \neq 0$, it follows that either $\mu_2 = 0$ or $\sigma_3 = 0$. In both cases, equalities~\eqref{eq:case1constr} and~\eqref{eq:2eqs} imply $A_3A_3^{\mathrm T} = \theta^2 I$ with $\theta^2 = \rho_1^2 + \rho_2^2 + \rho_3^2 \neq 0$ and hence tensor~$T'$ is calibrated.

\medskip
\noindent\textbf{Case 2: $\rho_3 = 0$.} By Lemma~\ref{lem:sqrtJ1}, the entries of tensor~$T'$ are constrained by (we take into account that $\lambda_1^2 = \mu_2^2$)
\[
(\nu_1^2 + \rho_1^2 + \sigma_1^2)(\sigma_1^2 + \sigma_2^2) = (\nu_2^2 + \rho_2^2 + \sigma_2^2)(\sigma_1^2 + \sigma_2^2) = 0.
\]
Since tensor $T'$ is real, it follows that $\sigma_1 = \sigma_2 = 0$. Again by Lemma~\ref{lem:sqrtJ1} we have
\[
\nu_1\nu_2 + \rho_1\rho_2 = \nu_1^2 + \rho_1^2 - \nu_2^2 - \rho_2^2 = (\nu_1^2 + \rho_1^2 - (\sigma_3 + \mu_2)^2) (\nu_1^2 + \rho_1^2 - (\sigma_3 - \mu_2)^2) = 0.
\]
Suppose first that $\nu_1^2 + \rho_1^2 \neq 0$. Then the correlation slices of~$T'$ can be represented in form
\[
T'_k = A_2 e_k \begin{bmatrix}0 & 0 & \mu_2\end{bmatrix} - \begin{bmatrix}0 \\ 0 \\ -1\end{bmatrix} e_k^{\mathrm T} A_3^{\mathrm T},
\]
where
\[
A_2 = \begin{bmatrix}\epsilon_1 & 0 & 0\\0 & 1 & 0\\ 0 & 0 & \epsilon_2\end{bmatrix}, \qquad A_3 = \begin{bmatrix}\nu_1 & \nu_2 & 0\\ \rho_1 & \rho_2 & 0\\ 0 & 0 & \sigma_3 - \epsilon_2\mu_2\end{bmatrix},
\]
$\epsilon_k = \pm 1$. So that $A_2 = \pm R_2$ and $A_3 = \theta R_3$, where $R_2, R_3 \in \mathrm{SO}(3)$ and $\theta^2 = \nu_1^2 + \rho_1^2 \neq 0$.

Finally, if $\nu_1^2 + \rho_1^2 = 0$, then $\nu_1 = \rho_1 = \nu_2 = \rho_2 = 0$ and~$T'$ is calibrated as well, since
\[
T'_k = \begin{bmatrix}\epsilon_1 & 0 & 0\\0 & 1 & 0\\ 0 & 0 & \epsilon_2\end{bmatrix} e_k \begin{bmatrix}0 & 0 & \mu_2\end{bmatrix} - 0 e_k^{\mathrm T} R_3^{\mathrm T},
\]
where $R_3$ is arbitrary rotation matrix.

Thus we have shown that the trifocal tensor~$T'$ is calibrated in either case. By Lemma~\ref{lem:transform}, the tensor~$T$ is calibrated too. Theorem~\ref{thm:crit_real} is proved.
\end{proof}

\section{Discussion}
\label{sec:disc}

We have defined a new notion --- the trifocal essential matrix. Algebraically, it is a complex $3\times 3$ matrix associated with a given calibrated trifocal tensor~$\hat T$ by the contraction of~$\hat T$ and an arbitrary 3-vector whose squared components sum to zero. Geometrically, it is constructed from a given point on the absolute conic and represents a mapping from the pencil of lines in the third image to the corresponding point in the second image. In this paper, the trifocal essential matrix plays a technical role. However its deeper investigation should help to explain why its properties are so close to the properties of ordinary (bifocal) essential matrix.

Based on the characterization of the set of trifocal essential matrices, we have derived the three necessary conditions on a calibrated trifocal tensor (Theorems~\ref{thm:nine},~\ref{thm:99} and~\ref{thm:six}). They have form of 15 quartic and 99 quintic polynomial equations. We emphasize that these constraints are related to the calibrated case and do not hold for arbitrary trifocal tensors. Moreover, we have shown that the 15 quartic constraints are also sufficient a for a real trifocal tensor to be calibrated. The application of these results to computer vision problems is left for further work.

\bibliographystyle{amsplain}

\end{document}